\newcommand{\Rbb}{\mathbb{R}}
\newcommand{\Fcal}{\mathcal{F}}
\newcommand{\Ocal}{\mathcal{O}}
\newcommand{\Scal}{\mathcal{S}}
\newcommand{\Xcal}{\mathcal{X}}
\newcommand{\Ycal}{\mathcal{Y}}
\DeclareMathOperator*{\Exp}{\mathbb{E}}
\newcommand\emp[1]{{\color{Blue} #1}}
\newcommand \bra{\left\langle}
\newcommand \ket{\right\rangle}
\newcommand \braket[2]{\bra #1, #2 \ket}
\newcommand\xf{x}
\newcommand\yf{y}
\newcommand\ux{u}
\newcommand\uy{v}
\newcommand\Gx{G_u}
\newcommand\Gy{G_v}
\newcommand\hatGx{\hat G_u}
\newcommand\hatGy{\hat G_v}
\newcommand\hatux{\hat u}
\newcommand\hatuy{\hat v}
\newcommand\Ux{U}
\newcommand\Uy{V}
\newcommand\hatg{\hat g}
\newcommand\fid{\emph{id}\xspace}
\newcommand\fngrams{\emph{ngrams}\xspace}
\newcommand\fcats{\emph{cats}\xspace}
\newcommand\rankone[1]{#1 \otimes #1}
\newtheorem{proposition}{Proposition}
\newcommand\email[1]{\texttt{\footnotesize #1}}
\title{Efficient Training on Very Large Corpora\\
via Gramian Estimation}
\author{
Walid Krichene\thanks{Google Research.}\\\email{walidk@google.com}\vspace{.1in}\\Xinyang Yi\footnotemark[1]\\\email{xinyang@google.com}
\and
Nicolas Mayoraz\footnotemark[1]\\\email{nmayoraz@google.com}\vspace{.1in}\\Lichan Hong\footnotemark[1]\\\email{lichan@google.com}
\and
Steffen Rendle\footnotemark[1]\\\email{srendle@google.com}\vspace{.1in}\\Ed Chi\footnotemark[1]\\\email{edchi@google.com}
\and
Li Zhang\footnotemark[1]\\\email{liqzhang@google.com}\vspace{.1in}\\John Anderson\footnotemark[1]\\\email{janders@google.com}
}
\date{}
\begin{document}
\maketitle

\begin{abstract}
We study the problem of learning similarity functions over very large corpora using neural network embedding models. These models are typically trained using SGD with sampling of random observed and unobserved pairs, with a number of samples that grows quadratically with the corpus size, making it expensive to scale to very large corpora.
We propose new efficient methods to train these models without having to sample unobserved pairs. Inspired by matrix factorization, our approach relies on adding a global quadratic penalty to all pairs of examples and expressing this term as the matrix-inner-product of two generalized Gramians. We show that the gradient of this term can be efficiently computed by maintaining estimates of the Gramians, and develop variance reduction schemes to improve the quality of the estimates. We conduct large-scale experiments that show a significant improvement in training time and generalization quality compared to traditional sampling methods.
\end{abstract}

\section{Introduction}
\label{sec:introduction}
We consider the problem of learning a similarity function $h: \Xcal \times \Ycal \to \mathbb R$, that maps each pair of items, represented by their feature vectors $(\xf, \yf) \in \Xcal \times \Ycal$, to a real number $h(\xf, \yf)$, representing their similarity. We will refer to $\xf$ and $\yf$ as the left and right feature vectors, respectively. Many problems can be cast in this form: In a natural language processing setting, $\xf$ represents a context (e.g. a bag of words), $\yf$ represents a candidate word, and the target similarity measures the likelihood to observe $\yf$ in context $\xf$~\citep{mikolov2013word2vec,pennington2014glove,levy2014neural}. In recommender systems, $\xf$ represents a user query (the user id and any available contextual information), $\yf$ represents a candidate item to recommend, and the target similarity is a measure of relevance of item $y$ to query $x$, e.g. a movie rating~\citep{agarwal2009regression}, or the likelihood to watch a given movie~\citep{hu2008collborative,rendle2010FM}. 
Other applications include image similarity, where $\xf$ and $\yf$ are pixel-representations of a pair of images~\citep{bromley1993signature,chechik2010large,schroff2015facenet}, and network embedding models~\citep{grover2016node2vec, qiu2018network}, where $\xf$ and $\yf$ are nodes in a network and the target similarity is wheter an edge connects them.


A popular approach to learning similarity functions is to train an embedding representation of each item, such that items with high similarity are mapped to vectors that are close in the embedding space. A common property of such problems is that only a very small subset of all possible pairs $\Xcal \times \Ycal$ is present in the training set, and those examples typically have high similarity. Training exclusively on observed examples has been demonstrated to yield poor generalization performance. Intuitively, when trained only on observed pairs, the model places the embedding of a given item close to similar items, but does not learn to place it far from dissimilar ones~\citep{shazeer2016swivel,xin2017folding}.

Taking into account unobserved pairs is known to improve the embedding quality in many applications, including recommendation~\citep{hu2008collborative,yu2017selection} and word analogy tasks~\citep{shazeer2016swivel}. This is often achieved by adding a low-similarity prior on \emph{all pairs}, which acts as a repulsive force between all embeddings. But because it involves a number of terms quadratic in the corpus size, this term is computationally intractable (except in the linear case), and it is typically optimized using sampling: for each observed pair in the training set, a set of random unobserved pairs is sampled and used to compute an estimate of the repulsive term. But as the corpus size increases, the quality of the estimates deteriorates unless the sample size is increased, which limits scalability. In this paper, we address this issue by developing new methods to efficiently estimate the repulsive term without having to sample a large number of unobserved pairs.

\subsubsection*{Related work}
Our approach is inspired by matrix factorization models, which correspond to the special case of linear embedding functions. They are typically trained using alternating least squares~\citep{hu2008collborative}, or coordinate descent methods~\citep{bayer2017generic}, which circumvent the computational burden of the repulsive term by writing it as a matrix-inner-product of two Gramians, and computing the left Gramian before optimizing over the right embeddings, and vice-versa.

Unfortunately, in non-linear embedding models, each update of the model parameters induces a simulateneous change in all embeddings, making it impractical to recompute the Gramians at each iteration. As a result, the Gramian formulation has been largely ignored in the non-linear setting. Instead, non-linear embedding models are trained using stochastic gradient methods with sampling of unobserved pairs, see~\cite{chen2016strategies}. In its simplest variant, the sampled pairs are taken uniformly at random, but more sophisticated schemes have been proposed, such as adaptive sampling~\citep{bengio2008adaptive,bai2017tapas}, and importance sampling~\citep{bengio2003quick,mikolov2013word2vec} to account for item frequencies. 
We also refer to~\cite{yu2017selection} for a comparative study of sampling methods in recommender systems. \cite{vincent2015efficient} were, to our knowledge, the first to attempt leveraging the Gramian formulation in the non-linear case. They consider a model where \emph{only one of the embedding functions is non-linear}, and show that the gradient can be computed efficiently in that case. Their result is remarkable in that it allows exact gradient computation, but this unfortunately does not generalize to the case where both embedding functions are non-linear.

\subsubsection*{Our contributions}
We propose new methods that leverage the Gramian formulation in the non-linear case, and that, unlike previous approaches, are efficient even when both left and right embeddings are non-linear. Our methods operate by maintaining stochastic estimates of the Gram matrices, and using different variance reduction schemes to improve the quality of the estimates. Perhaps most importantly, they do not require sampling large numbers of unobserved pairs, and experiments show that they scale far better than traditional sampling approaches when the corpus is very large. 

We start by reviewing preliminaries in Section~\ref{sec:preliminaries}, then derive the methods and analyze them in Section~\ref{sec:estimation}. Finally, we conduct large-scale experiments in Section~\ref{sec:experiments}, on a classification task on the Wikipedia dataset and a regression task on the MovieLens dataset. All the proofs are deferred to the appendix.

\section{Preliminaries}
\label{sec:preliminaries}

\begin{figure}
\centering
\includegraphics[width=.7\textwidth]{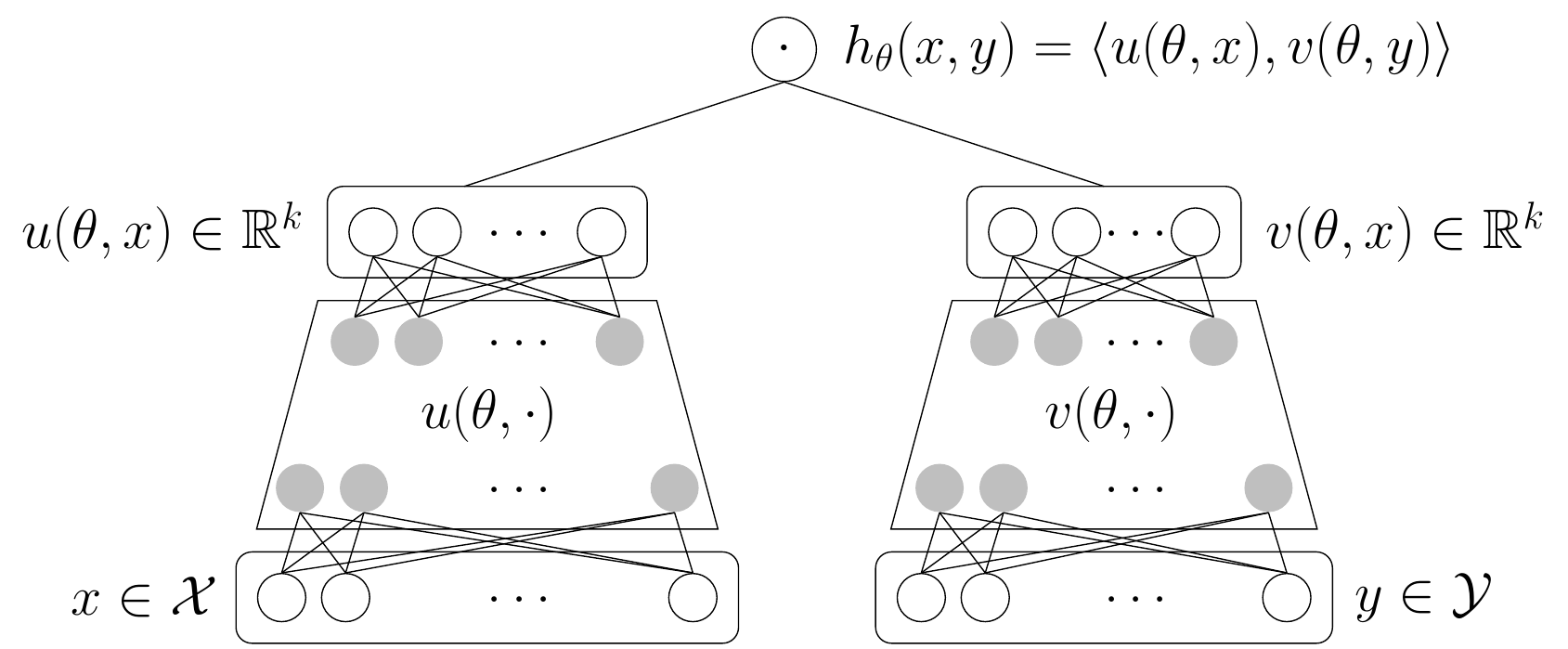}
\caption{An inner-product embedding model for learning a similarity function on $\Xcal \times \Ycal$. 
}
\label{fig:model}
\end{figure}

\subsection{Notation and problem formulation}
We consider embedding models that consist of two embedding functions $u: \Rbb^d \times \Xcal \to \Rbb^k$ and $v: \Rbb^d \times \Ycal \to \Rbb^k$, which map a parameter vector\footnote{In many applications, it is desirable for the two embedding functions $u, v$ to share certain parameters, e.g. embeddings of categorical features common to left and right items; hence, we use the same $\theta$ for both.} $\theta \in \Rbb^d$ and feature vectors $x, y$ to embeddings $u(\theta, x), v(\theta, y) \in \Rbb^k$. The output of the model is the inner product\footnote{This also includes cosine similarity models when the embedding functions $u, v$ are normalized.}\footnote{One advantage of an inner-product model is that it allows for efficient retrieval: given a query item $x$, the problem of retrieving items $y$ with high similarity to $x$ is a maximum inner product search problem (MIPS), which can be approximated efficiently~\citep{shrivastava2014ALSH, neyshabur2015symmetric}.} of the embeddings
\begin{equation}
\label{eq:model}
h_\theta(\xf, \yf) = \braket{u(\theta, \xf)}{v(\theta, \yf)},
\end{equation}
where $\braket{\cdot}{\cdot}$ denotes the usual inner-product on $\Rbb^k$. Low-rank matrix factorization is a special case of~\eqref{eq:model}, in which the left and right embedding functions are linear in $x$ and $y$. Figure~\ref{fig:model} illustrates a non-linear model, in which each embedding function is given by a feed-forward neural network. We denote the training set by
\[
T = \{(\xf_i, \yf_i, s_i) \in \Xcal \times \Ycal \times \Rbb\}_{i \in \{1, \dots, n\}},
\]
where $\xf_i, \yf_i$ are the feature vectors and $s_i$ is the target similarity for example $i$. To make notation more compact, we will use $u_i(\theta), v_i(\theta)$ as a shorthand for $u(\theta, x_i), v(\theta, y_i)$, respectively.

As discussed in the introduction, we also assume that we are given a low-similarity prior $p_{ij} \in \Rbb$ for all pairs $(i, j) \in \{1, \dots, n\}^2$. Given a scalar loss function $\ell: \Rbb \times \Rbb \to \Rbb$, the objective function is given by
\begin{equation}
\label{eq:objective}
\min_{\theta \in \Rbb^d} \frac{1}{n} \sum_{i = 1}^n \ell\left(\braket{u_i(\theta)}{v_i(\theta)}, s_i\right) + \frac{\lambda}{n^2} \sum_{i = 1}^n \sum_{j = 1}^n (\braket{u_i(\theta)}{v_j(\theta)} - p_{ij})^2,
\end{equation}
where $\lambda$ is a positive hyper-parameter. To simplify the discussion, we will assume a uniform zero prior $p_{ij}$ as in~\citep{hu2008collborative}, but we relax this assumption in Appendix~\ref{app:low-rank}.

The last term in~\eqref{eq:objective} is a double-sum over the training set and can be problematic to optimize efficiently. We will denote it by
\[
g(\theta) \coloneqq \frac{1}{n^2}\sum_{i = 1}^n \sum_{j = 1}^n  \braket{u_i(\theta)}{v_j(\theta)}^2.
\]
Existing methods typically rely on sampling to approximate $g(\theta)$, and are usually referred to as negative sampling or candidate sampling, see~\cite{chen2016strategies,yu2017selection} for recent surveys. 
Due to the double sum, the quality of the sampling estimates degrades as the corpus size increases, which can significantly increase training times. This can be alleviated by increasing the sample size, but does not scale to very large corpora.

\subsection{Gramian formulation}
\label{sec:gram_formulation}
A different approach to optimizing~\eqref{eq:objective}, widely popular in matrix factorization, is to rewrite $g(\theta)$ as the inner product of two Gram matrices. Let us denote by $U_\theta \in \Rbb^{n \times k}$ the matrix of all left embeddings such that $u_i(\theta)$ is the $i$-th row of $U_\theta$, and similarly for $V_\theta \in \Rbb^{n \times k}$. Then denoting the matrix inner-product by $\braket{A}{B} = \sum_{i, j} A_{ij}B_{ij}$, we can rewrite $g(\theta)$ as:
\begin{equation}
g(\theta)
= \frac{1}{n^2}\sum_{i = 1}^n \sum_{j = 1}^n  \braket{u_i(\theta)}{v_j(\theta)}^2 = \frac{1}{n^2}\sum_{i = 1}^n \sum_{j = 1}^n (U_\theta V^\top_\theta)_{ij}^2 = \frac{1}{n^2}\braket{U_\theta V^\top_\theta}{U_\theta V^\top_\theta}. \label{eq:gravity_sum}
\end{equation}
Now, using the adjoint property of the inner product, we have $\braket{U_\theta V^\top_\theta}{U_\theta V^\top_\theta} = \braket{U^\top_\theta U_\theta}{V^\top_\theta V_\theta}$, and if we denote by $u \otimes u$ the outer product of a vector $u$ by itself, and define the Gram matrices\footnote{Note that a given left item $x$ may appear in many example pairs (and similarly for right items), one can define the Gram matrices as a sum over unique items. The two formulations are equivalent up to reweighting of the embeddings.}
\begin{align}
\label{eq:gramian_definition}
\begin{cases}
\Gx(\theta) \coloneqq \frac{1}{n}U_\theta^\top U_\theta = \frac{1}{n} \sum_{i = 1}^n \ux_i(\theta) \otimes \ux_i(\theta),\\
\Gy(\theta) \coloneqq \frac{1}{n} V^\top_\theta V_\theta = \frac{1}{n}\sum_{i = 1}^n \uy_i(\theta) \otimes \uy_i(\theta),
\end{cases}
\end{align}
we have
\begin{equation}
\label{eq:gravity_dot}
g(\theta) = \braket{\Gx(\theta)}{\Gy(\theta)}.
\end{equation}
The Gramians are $k \times k$ PSD matrices, where $k$, the dimension of the embedding space, is much smaller than $n$ -- typically $k$ is smaller than $1000$, while $n$ can be arbitrarily large. Thus, the Gramian formulation~\eqref{eq:gravity_dot} has a much lower computational complexity than the double sum formulation~\eqref{eq:gravity_sum}, and this transformation is at the core of alternating least squares and coordinate descent methods~\citep{hu2008collborative, bayer2017generic}, which operate by computing the exact Gramian for one side, and solving for the embeddings on the other. However, these methods do not apply in the non-linear setting due to the dependence on~$\theta$, as a change in the model parameters simultaneously changes all embeddings, making it intractable to recompute the Gramians at each iteration, so the Gramian formulation has not been used when training non-linear models. In the next section, we will show that it can in fact be leveraged in the non-linear case, and leads to significant speed-ups in numerical experiments.

\section{Training Embedding Models using Gramian Estimates}
\label{sec:estimation}
Using the Gramians defined in~\eqref{eq:gramian_definition}, the objective function~\eqref{eq:objective} can be rewritten as a sum over examples $\frac{1}{n}\sum_{i = 1}^n [f_i(\theta) + \lambda g_i(\theta)]$, where
\begin{align}
f_i(\theta) &\coloneqq \ell\left(\braket{u_i(\theta)}{v_i(\theta)}, s_i\right)\\
g_i(\theta) &\coloneqq \frac{1}{2n} \sum_{j = 1}^n \big[\braket{u_i(\theta)}{v_j(\theta)}^2 + \braket{u_j(\theta)}{v_i(\theta)}^2\big] \notag \\
&= \frac{1}{2}[\braket{u_i(\theta)}{G_v(\theta)u_i(\theta)} + \braket{v_i(\theta)}{G_u(\theta)v_i(\theta)}].\label{eq:gravity}
\end{align}
Intuitively, for each example $i$, $-\nabla f_i(\theta)$ pulls the embeddings $u_i$ and $v_i$ close to each other (assuming a high similarity $s_i$), while $-\nabla g_i(\theta)$ creates a repulsive force between $u_i$ and all embeddings $\{v_j\}_{j \in \{1, \dots, n\}}$, and between $v_i$ and all embeddings $\{u_j\}_{j \in \{1, \dots, n\}}$. Due to this interpretation, we will refer to $g(\theta) = \sum_{i = 1}^n g_i(\theta)$ as the \emph{gravity} term, as it pulls the embeddings towards certain regions of the embedding space. We further discuss its properties and interpretations in Appendix~\ref{app:gravity}.

We start from the observation that, while the Gramians are expensive to recompute at each iteration, we can maintain PSD estimates $\hatGx, \hatGy$ of the true Gramians $\Gx(\theta), \Gy(\theta)$, respectively. Then the gradient of $g(\theta)$ (equation~\eqref{eq:gravity_sum}) can be approximated by the gradient (w.r.t. $\theta$) of
\begin{equation}
\label{eq:gravity_estimate}
\hatg_i(\theta, \hatGx, \hatGy) \coloneqq \braket{\ux_i(\theta)}{\hatGy \ux_i(\theta)} +
\braket{\uy_i(\theta)}{\hatGx \uy_i(\theta)},
\end{equation}
as stated in the following proposition.
\begin{proposition}
\label{prop:unbiased}
If $i$ is drawn uniformly from $\{1, \dots, n\}$, and $\hatGx, \hatGy$ are unbiased estimates of $\Gx(\theta), \Gy(\theta)$ and independent of $i$, then $\nabla_\theta \hatg_i(\theta, \hatGx, \hatGy)$ is an unbiased estimate of $\nabla g(\theta)$.
\end{proposition}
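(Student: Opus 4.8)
The plan is to evaluate the expectation $\mathbb{E}[\nabla_\theta \hat g_i(\theta, \hat G_u, \hat G_v)]$ by conditioning in two stages — first over the Gramian estimates $\hat G_u, \hat G_v$, then over the uniform index $i$ — using their independence, and to recognize the result as the product-rule expansion of $\nabla_\theta g(\theta)$ where $g(\theta) = \braket{G_u(\theta)}{G_v(\theta)}$. The one fact that must be kept in view throughout is that $\nabla_\theta \hat g_i$ differentiates with respect to $\theta$ \emph{while treating $\hat G_u, \hat G_v$ as constant matrices} (a ``stop-gradient''), whereas $\nabla_\theta g$ genuinely differentiates through both Gramians. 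Reconciling these is the conceptual crux.

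First I would record a pointwise gradient identity. For any fixed matrix $M$, writing $\braket{w}{Mw} = \braket{M}{w \otimes w}$ as a matrix inner product, and since $M$ does not depend on $\theta$,
\[
\nabla_\theta \braket{u_i(\theta)}{\hat G_v\, u_i(\theta)} = \braket{\hat G_v}{\nabla_\theta\!\left(u_i(\theta) \otimes u_i(\theta)\right)},
\]
where the inner product contracts the two $k \times k$ matrix indices and $\nabla_\theta(u_i \otimes u_i)$ carries an extra $\theta$-index; the analogous identity holds for the $v_i$ term with $\hat G_u$. Because $\hat G_v$ now enters only as a constant \emph{linear} coefficient, taking the expectation over the estimates commutes with the gradient by mere linearity of the inner product — no dominated-convergence argument is required. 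Invoking unbiasedness, $\mathbb{E}[\hat G_u] = G_u(\theta)$ and $\mathbb{E}[\hat G_v] = G_v(\theta)$, together with independence from $i$, yields
\[
\mathbb{E}\left[\nabla_\theta \hat g_i\right]
= \mathbb{E}_i\!\left[\braket{G_v(\theta)}{\nabla_\theta(u_i \otimes u_i)} + \braket{G_u(\theta)}{\nabla_\theta(v_i \otimes v_i)}\right],
\]
in which $G_u(\theta), G_v(\theta)$ remain outside the differentiation.

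Next I would take the expectation over $i$, which is the average $\tfrac{1}{n}\sum_{i=1}^n$. Since $G_u(\theta)$ and $G_v(\theta)$ do not depend on $i$, they pass through the average, and by the definition of the Gramians in~\eqref{eq:gramian_definition} one has $\tfrac{1}{n}\sum_i \nabla_\theta(u_i \otimes u_i) = \nabla_\theta G_u(\theta)$ and $\tfrac{1}{n}\sum_i \nabla_\theta(v_i \otimes v_i) = \nabla_\theta G_v(\theta)$. This gives
\[
\mathbb{E}\left[\nabla_\theta \hat g_i\right] = \braket{G_v(\theta)}{\nabla_\theta G_u(\theta)} + \braket{G_u(\theta)}{\nabla_\theta G_v(\theta)},
\]
where each Gramian is now fully differentiated in exactly one of the two terms and held fixed in the other. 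Applying the product rule to $g(\theta) = \braket{G_u(\theta)}{G_v(\theta)}$ gives precisely $\nabla_\theta g = \braket{\nabla_\theta G_u}{G_v} + \braket{G_u}{\nabla_\theta G_v}$, matching the expression above and completing the argument.

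I expect the main obstacle to be conceptual rather than computational: one must justify why freezing $\hat G_u, \hat G_v$ under the gradient is correct even though $\nabla_\theta g$ differentiates through both Gramians. The resolution is the symmetrized two-term structure of $\hat g_i$: the $u$-term reproduces the half of $\nabla_\theta g$ that differentiates $G_u$ against a frozen $G_v$, and the $v$-term reproduces the half that differentiates $G_v$ against a frozen $G_u$. Together they recover both halves of the product rule, so the ``stop-gradient'' estimator is unbiased even though no single term ever differentiates through a Gramian.
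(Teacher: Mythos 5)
Your proposal is correct and follows essentially the same route as the paper's proof: compute $\nabla_\theta \hat g_i$ with the Gramian estimates held fixed (the paper writes this via the Jacobian tensors $J_{u,i}(\theta)[\hat G_v] + J_{v,i}(\theta)[\hat G_u]$, you via $\braket{\hat G_v}{\nabla_\theta(u_i\otimes u_i)}$, which is the same contraction), then take expectations using unbiasedness and independence, and identify the result with the product-rule expansion of $\nabla\braket{G_u(\theta)}{G_v(\theta)}$. Your closing remark about why the two ``frozen-Gramian'' terms together reconstruct both halves of the product rule is a nice articulation of the point the paper leaves implicit, but it is not a different argument.
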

In a mini-batch setting, these estimates can be further averaged over a batch of examples $i \in B$ (which we do in our experiments), but we will omit batches to keep the notation concise.
Next, we propose several methods for maintaining the Gramian estimates $\hatGx, \hatGy$, and discuss their tradeoffs.
\begin{algorithm}[tb]
   \caption{{SAGram (Stochastic Average Gramian)}}
   \label{alg:sagram}
\begin{algorithmic}[1]
    \STATE {\bfseries Input:} Training data $\{(\xf_i, \yf_i, s_i)\}_{i \in \{1, \dots, n\}}$, learning rate $\eta > 0$.
    \STATE {\bfseries Initialization phase}
    \begin{ALC@g}
        \STATE draw $\theta$ randomly
        \STATE $\hatux_i \leftarrow \ux_i(\theta), \ \hatuy_i \leftarrow \uy_i(\theta) \quad \forall i \in \{1, \dots, n\}$
        \STATE ${\hat S_\ux} \leftarrow \frac{1}{n}\sum_{i = 1}^n \hatux_i \otimes \hatux_i$, \ ${\hat S_\uy} \leftarrow \frac{1}{n}\sum_{i = 1}^n \hatuy_i \otimes \hatuy_i$
    \end{ALC@g}
    \REPEAT
        \STATE Update Gramian estimates ($i \sim \text{Uniform}(n)$)
        \begin{ALC@g}
            \STATE $\hatGx \leftarrow {\hat S_\ux} + \beta [\ux_i(\theta) \otimes \ux_i(\theta) - \hatux_i \otimes \hatux_i]$, \quad
            $\hatGy \leftarrow {\hat S_\uy} + \beta [\uy_i(\theta) \otimes \uy_i(\theta) - \hatuy_i \otimes \hatuy_i]$
        \end{ALC@g}
        \STATE Update model parameters then update caches ($i \sim \text{Uniform}(n)$)
        \begin{ALC@g}
            \STATE $\theta \leftarrow \theta - \eta \nabla_\theta [f_i(\theta) + \lambda \hatg_i(\theta, \hatGx, \hatGy)]$
            \STATE ${\hat S_\ux} \leftarrow {\hat S_\ux} + \frac{1}{n} [\ux_i(\theta)\otimes \ux_i(\theta) - \hatux_i \otimes \hatux_i]$, \quad ${\hat S_\uy} \leftarrow {\hat S_\uy} + \frac{1}{n} [\uy_i(\theta)\otimes \uy_i(\theta) - \hatuy_i \otimes \hatuy_i]$
            \STATE $\hatux_i \leftarrow \ux_i(\theta), \ \hatuy_i \leftarrow \uy_i(\theta)$
        \end{ALC@g}
    \UNTIL{stopping criterion}
\end{algorithmic}
\end{algorithm}
\subsection{Stochastic Average Gramian}
\label{sec:estimation-sagram}
Inspired by variance reduction for Monte Carlo integrals~\citep{hammersley1964monte,evans2000approximating}, many variance reduction methods have been developed for stochastic optimization. In particular, stochastic average gradient methods~\citep{schmidt2017SAG,defazio2014saga} work by maintaining a cache of individual gradients, and estimating the full gradient using this cache.
Since each Gramian is a sum of outer-products (see equation~\eqref{eq:gramian_definition}), we can apply the same technique to estimate Gramians. For all $i \in \{1, \dots, n\}$, let $\hatux_i, \hatuy_i$ be a cache of the left and right embeddings respectively. We will denote by a superscript $(t)$ the value of a variable at iteration $t$. Let
${\hat S_\ux}^{(t)} = \frac{1}{n}\sum_{i = 1}^n \hatux^{(t)}_i \otimes \hatux^{(t)}_i$,
which corresponds to the Gramian based on the current caches. 
At each iteration $t$, an example $i$ is drawn uniformly at random and the estimate of the Gramian is given by
\begin{equation}
\label{eq:sagram_estimate}
\hatGx^{(t)} = {\hat S_\ux}^{(t)} + \beta [\ux_i(\theta^{(t)}) \otimes \ux_i(\theta^{(t)}) - \hatux^{(t)}_i \otimes \hatux^{(t)}_i],
\end{equation}
and similarly for $ \hatGy^{(t)}$. This is summarized in Algorithm~\ref{alg:sagram}, where the model parameters are updated using SGD (line 10), but can be replaced with any first-order method. Note that for efficient implementation, the sums ${\hat S_\ux}, {\hat S_\uy}$ are not recomputed at each step, they are updated in an online fashion (line 11). Here $\beta$ can take one of the following values:
\begin{enumerate}
\item $\beta = \frac{1}{n}$, following SAG~\citep{schmidt2017SAG}, or
\item $\beta = 1$, following SAGA~\citep{defazio2014saga}. 
\end{enumerate}
The choice of $\beta$ comes with trade-offs that we briefly discuss below. We will denote the cone of positive semi-definite $k \times k$ matrices by $\Scal^k_+$.

\begin{proposition}
Suppose $\beta = \frac{1}{n}$ in~\eqref{eq:sagram_estimate}. Then for all $t$, $\hatGx^{(t)}, \hatGy^{(t)}$ remain in $\Scal^k_+$.
\end{proposition}

\begin{proposition}
Suppose $\beta = 1$ in~\eqref{eq:sagram_estimate}. Then for all $t$, $\hatGx^{(t)}$ is an unbiased estimate of $\Gx(\theta^{(t)})$.
\end{proposition}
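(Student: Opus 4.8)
The plan is to invoke the SAGA mechanism directly: the cached matrix ${\hat S_\ux}^{(t)}$ is, by construction, exactly the empirical average $\frac1n\sum_{j=1}^n \hatux^{(t)}_j \otimes \hatux^{(t)}_j$ of the cached outer products, and the choice $\beta = 1$ is precisely what makes the stochastic correction term conditionally mean-zero after the cached Gramian is subtracted. First I would fix an iteration $t$ and condition on the $\sigma$-algebra $\Fcal_t$ generated by all the randomness preceding the draw of $i$ on line 6 of Algorithm~\ref{alg:sagram}. Under this conditioning the parameter $\theta^{(t)}$, the caches $\{\hatux^{(t)}_j\}_{j=1}^n$, and hence ${\hat S_\ux}^{(t)}$ are all deterministic; the only remaining source of randomness in $\hatGx^{(t)}$ is the index $i$, which is a fresh draw, uniform on $\{1,\dots,n\}$ and independent of $\Fcal_t$.

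Next I would take the conditional expectation of the defining identity~\eqref{eq:sagram_estimate} with $\beta = 1$, using linearity and treating the constant ${\hat S_\ux}^{(t)}$ as pulled out of the expectation. The two remaining terms are handled purely by the definition of a uniform average of outer products:
\[
\Exp_i\big[\ux_i(\theta^{(t)}) \otimes \ux_i(\theta^{(t)})\big] = \frac1n\sum_{i=1}^n \ux_i(\theta^{(t)}) \otimes \ux_i(\theta^{(t)}) = \Gx(\theta^{(t)}),
\]
which is exactly the true Gramian of~\eqref{eq:gramian_definition}, while
\[
\Exp_i\big[\hatux^{(t)}_i \otimes \hatux^{(t)}_i\big] = \frac1n\sum_{i=1}^n \hatux^{(t)}_i \otimes \hatux^{(t)}_i = {\hat S_\ux}^{(t)}.
\]
Substituting these back gives $\Exp[\hatGx^{(t)} \mid \Fcal_t] = {\hat S_\ux}^{(t)} + \Gx(\theta^{(t)}) - {\hat S_\ux}^{(t)} = \Gx(\theta^{(t)})$, i.e. the estimate is conditionally unbiased; since $\theta^{(t)}$ is $\Fcal_t$-measurable, this is exactly the claimed unbiasedness for $\Gx(\theta^{(t)})$. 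The identical argument applies verbatim to $\hatGy^{(t)}$.

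Two points would warrant a short verifying remark, neither of which I expect to be a genuine obstacle. First, the cancellation hinges on the invariant ${\hat S_\ux}^{(t)} = \frac1n\sum_j \hatux^{(t)}_j \otimes \hatux^{(t)}_j$ holding at every iteration; I would note that the online update on line 11 preserves it, since adding $\frac1n[\ux_i(\theta)\otimes \ux_i(\theta) - \hatux_i \otimes \hatux_i]$ to ${\hat S_\ux}$ mirrors exactly the change in the sum when the single cache entry $\hatux_i$ is refreshed on line 12. Second, the argument relies on the line-6 index being independent of $\theta^{(t)}$ and the caches, and on $\beta = 1$; this is precisely what fails for $\beta = \frac1n$ (SAG), where the subtracted term does not cancel ${\hat S_\ux}^{(t)}$ and a residual bias proportional to ${\hat S_\ux}^{(t)} - \Gx(\theta^{(t)})$ survives. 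The only thing to handle carefully is the conditioning bookkeeping, so that ${\hat S_\ux}^{(t)}$ may legitimately be treated as a constant when the expectation over $i$ is taken.
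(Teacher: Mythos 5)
Your proof is correct and follows essentially the same route as the paper: condition on the filtration up to iteration $t$, take the conditional expectation over the uniformly drawn index $i$, and observe that the expected cache term $\frac{1}{n}\sum_i \hatux^{(t)}_i \otimes \hatux^{(t)}_i$ cancels ${\hat S_\ux}^{(t)}$, leaving exactly $\Gx(\theta^{(t)})$. The extra remarks on the invariant maintained by the online update and on why $\beta = \tfrac{1}{n}$ fails are sensible but not needed beyond what the paper's own argument contains.
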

\vspace{-.01in}
While taking $\beta = 1$ gives an unbiased estimate, note that it does not guarantee that the estimates remain in $\Scal^k_+$. In practice, this can cause numerical issues, but can be avoided by projecting the estimates~\eqref{eq:sagram_estimate} on $\Scal^k_+$, using the eigenvalue decomposition of each estimate. The per-iteration computational cost of maintaining the Gramian estimates is $\Ocal(k)$ to update the caches, $\Ocal(k^2)$ to update the estimates ${\hat S_\ux}, {\hat S_\uy}, \hatGx, \hatGy$, and $\Ocal(k^3)$ for projecting on $\Scal_+^k$. Given the small size of $k$, $\Ocal(k^3)$ remains tractable. The memory cost is $\Ocal(nk)$, since each embedding needs to be cached (plus a negligible $\Ocal(k^2)$ for storing the Gramian estimates). Note that this makes SAGram much less expensive than applying the original SAG(A) methods, which require maintaining caches of the \emph{gradients}, which would incur a $\Ocal(nd)$ memory cost, where $d$ is the number of parameters of the model, and can be several orders of magnitude larger than the embedding dimension $k$. However, $\Ocal(nk)$ can still be prohibitively expensive when $n$ is very large. In the next section, we propose a different method which does not incur this additional memory cost, and does not require projection.

\subsection{Stochastic Online Gramian}
\label{sec:estimation-online}
To derive the second method, we reformulate problem~\eqref{eq:objective} as a two-player game. The first player optimizes over the parameters of the model $\theta$, the second player optimizes over the Gramian estimates $\hatGx, \hatGy \in \Scal^k_+$, and they seek to minimize the respective losses
\begin{equation}
\label{eq:game}
\begin{cases}
L_1^{\hatGx, \hatGy}(\theta) = \frac{1}{n}\sum_{i = 1}^n [f_i(\theta) + \lambda \hat{g}_i(\theta, \hatGx, \hatGy)] \\
L_2^{\theta}(\hatGx, \hatGy) = \frac{1}{2}\|\hatGx - \Gx(\theta)\|_F^2 + \frac{1}{2}\|\hatGy - \Gy(\theta)\|_F^2,
\end{cases}
\end{equation}
where $\hat g_i$ is defined in~\eqref{eq:gravity_estimate}, and $\|\cdot\|_F^{}$ denotes the Frobenius norm. To simplify the discussion, we will assume in this section that $f_i$ is differentiable. This reformulation can then be justified by characterizing its first-order stationary points, as follows.

\begin{proposition}
$(\theta, \hatGx, \hatGy)\in \Rbb^d \times \Scal^k_+ \times \Scal^k_+$ is a first-order stationary point for~\eqref{eq:game} if and only if $\theta$ is a first-order stationary point for problem~\eqref{eq:objective} and $\hatGx = \Gx(\theta), \hatGy = \Gy(\theta)$.
\end{proposition}


Several stochastic first-order dynamics can be applied to the problem, and Algorithm~\ref{alg:sogram} gives a simple instance where each player implements SGD with constant learning rates, $\eta$ for player $1$ and $\alpha$ for player 2. In this case, the updates of the Gramian estimates (line 7) have a particularly simple form, since $\nabla_{\hatGx} L_2^\theta(\hatGx, \hatGy) = \hatGx - \Gx(\theta)$, which can be estimated by $\hatGx - \ux_i(\theta) \otimes \ux_i(\theta)$, resulting in the update
\begin{equation}
\label{eq:sogram_update}
\hat G_u^{(t)} = (1-\alpha)\hat G_u^{(t-1)} + \alpha \ux_i(\theta^{(t)}) \otimes \ux_i(\theta^{(t)}),
\end{equation}
and similarly for $\hat G_v$. One advantage of this form is that each update performs a convex combination between the current estimate and a rank-1 PSD matrix, thus guaranteeing that the estimates remain in $\Scal_+^k$, without the need to project. The per-iteration cost of updating the estimates is $\Ocal(k^2)$, and the memory cost is $\Ocal(k^2)$ for storing the Gramians, which are both negligible.

The update~\eqref{eq:sogram_update} can also be interpreted as computing an online estimate of the Gramian by averaging rank-1 terms with decaying weights, thus we call the method Stochastic Online Gramian. Indeed, we have by induction on $t$,
\[
\hat G_u^{(t)} = \sum_{\tau = 1}^{t} \alpha(1-\alpha)^{t-\tau} u_{i_\tau}(\theta^{(\tau)}) \otimes u_{i_\tau}(\theta^{(\tau)}).
\]
Intuitively, the averaging reduces the variance of the estimator but introduces a bias, and the choice of the hyper-parameter $\alpha \in (0, 1)$ trades-off bias and variance. Similar smoothing of estimators has been observed to empirically improve convergence in other contexts, e.g.~\citep{mandt2014smoothed}. We give coarse estimates of this tradeoff under mild assumptions in the next proposition.
\begin{proposition}
\label{prop:bias-variance}
Let $\bar G^{(t)}_u = \sum_{\tau = 1}^t \alpha(1-\alpha)^{t-\tau}G_u(\theta^{(\tau)})$. Suppose that there exist $ \sigma, \delta > 0$ such that for all~$t$, $\Exp_{i \sim \text{Uniform}} \|u_i(\theta^{(t)}) \otimes u_i(\theta^{(t)}) - G_u(\theta^{(t)})\|_F^2 \leq \sigma^2$ and $\|G_\ux(\theta^{(t+1)}) - G_\ux(\theta^{(t)})\|_F \leq \delta$. Then $\forall t$,
\vspace{-.08in}
\begin{align}
\label{eq:variance_bound}
\Exp \|\hat G^{(t)}_u - \bar G^{(t)}_u\|_F^2 &\leq \sigma^2 \frac{\alpha}{2 - \alpha}\\
\label{eq:bias_bound}
\|\bar G_\ux^{(t)} - G_\ux^{(t)}\|_F &\leq \delta (1/\alpha - 1) + (1-\alpha)^t \|G_\ux^{(t)}\|_F.
\end{align}
\end{proposition}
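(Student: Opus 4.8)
The plan is to prove the two bounds separately, in each case starting from the explicit weighted-sum form of $\hat G_u^{(t)}$ already derived in the excerpt, namely $\hat G_u^{(t)} = \sum_{\tau=1}^t \alpha(1-\alpha)^{t-\tau}\, u_{i_\tau}(\theta^{(\tau)}) \otimes u_{i_\tau}(\theta^{(\tau)})$, and then reducing everything to summing geometric (and arithmetico-geometric) series.

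For the variance bound~\eqref{eq:variance_bound}, I would write the difference as a weighted sum of the rank-one sampling errors,
\[
\hat G_u^{(t)} - \bar G_u^{(t)} = \sum_{\tau = 1}^t \alpha(1-\alpha)^{t-\tau} X_\tau, \qquad X_\tau := u_{i_\tau}(\theta^{(\tau)}) \otimes u_{i_\tau}(\theta^{(\tau)}) - G_u(\theta^{(\tau)}).
\]
The key observation is that $\{X_\tau\}$ is a martingale difference sequence: at the time the rank-one term at step $\tau$ is formed, $\theta^{(\tau)}$ is already fixed while $i_\tau$ is a fresh uniform draw, so conditioning on the history that determines $\theta^{(\tau)}$ gives $\Exp[X_\tau \mid \theta^{(\tau)}] = G_u(\theta^{(\tau)}) - G_u(\theta^{(\tau)}) = 0$ by the definition~\eqref{eq:gramian_definition} of $G_u$. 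Hence the cross terms $\Exp\braket{X_\tau}{X_{\tau'}}$ vanish for $\tau \neq \tau'$ by the tower property, and expanding the squared Frobenius norm leaves only the diagonal:
\[
\Exp\|\hat G_u^{(t)} - \bar G_u^{(t)}\|_F^2 = \sum_{\tau=1}^t \alpha^2 (1-\alpha)^{2(t-\tau)}\,\Exp\|X_\tau\|_F^2 \leq \sigma^2 \alpha^2 \sum_{s=0}^{t-1}(1-\alpha)^{2s}.
\]
Bounding the finite sum by the infinite geometric series $\frac{1}{1-(1-\alpha)^2} = \frac{1}{\alpha(2-\alpha)}$ yields exactly $\sigma^2\frac{\alpha}{2-\alpha}$.

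For the bias bound~\eqref{eq:bias_bound}, I would first record the total weight $\sum_{\tau=1}^t \alpha(1-\alpha)^{t-\tau} = 1 - (1-\alpha)^t$, which falls short of $1$, and then split
\[
\bar G_u^{(t)} - G_u^{(t)} = \sum_{\tau=1}^t \alpha(1-\alpha)^{t-\tau}\big[G_u(\theta^{(\tau)}) - G_u^{(t)}\big] - (1-\alpha)^t G_u^{(t)}.
\]
The second term contributes the $(1-\alpha)^t\|G_u^{(t)}\|_F$ summand directly. For the first term, the drift assumption $\|G_u(\theta^{(s+1)}) - G_u(\theta^{(s)})\|_F \leq \delta$ telescopes to $\|G_u(\theta^{(\tau)}) - G_u^{(t)}\|_F \leq (t-\tau)\delta$, so by the triangle inequality this term is at most $\delta\alpha\sum_{s=0}^{t-1} s(1-\alpha)^s \leq \delta \alpha \cdot \frac{1-\alpha}{\alpha^2} = \delta(1/\alpha - 1)$, using the identity $\sum_{s\geq 0} s x^s = x/(1-x)^2$. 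Adding the two pieces gives~\eqref{eq:bias_bound}.

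The two series evaluations are routine; the one step that needs genuine care is the vanishing of the cross terms. The hard part will be pinning down the filtration precisely, since $\theta^{(\tau)}$ is itself coupled to all earlier sampling in Algorithm~\ref{alg:sogram} (both the Gramian-update and parameter-update draws), so I must argue that $X_\tau$ is mean-zero given the sigma-algebra that already determines $\theta^{(\tau)}$ together with all $X_{\tau'}$ for $\tau' < \tau$. Once that martingale structure is in place, $L^2$-orthogonality of the increments is immediate and the remainder is bookkeeping; note that both bounds invoke the stated uniform assumptions only, so no further regularity on the trajectory $\{\theta^{(\tau)}\}$ is required.
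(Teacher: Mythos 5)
Your proposal is correct and follows essentially the same route as the paper's proof: the same decomposition of $\hat G_u^{(t)} - \bar G_u^{(t)}$ into weighted zero-mean increments with the cross terms dropping out, and the same split of $\bar G_u^{(t)} - G_u^{(t)}$ into a drift term (telescoped via $\|G_u^{(\tau)} - G_u^{(t)}\|_F \leq (t-\tau)\delta$) plus the $(1-\alpha)^t\|G_u^{(t)}\|_F$ deficit, followed by the same geometric and arithmetico-geometric series evaluations. The one place you go beyond the paper is in spelling out the martingale-difference/filtration argument for why the cross terms vanish, which the paper asserts implicitly by calling the $\Delta_u^{(\tau)}$ zero-mean; your version is the more careful one.
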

The first assumption simply bounds the variance of single-point estimates, while the second bounds the distance between two consecutive Gramians (a reasonable assumption, since in practice the changes in Gramians vanish as the trajectory $\theta^{(\tau)}$ converges). In the limiting case $\alpha = 1$, $\hat G_u^{(t)}$ reduces to a single-point estimate, in which case the bias~\eqref{eq:bias_bound} vanishes and the variance~\eqref{eq:variance_bound} is maximal, while smaller values of $\alpha$ decrease variance and increase bias. This is confirmed in our experiments, as discussed in Section~\ref{sec:experiments}.

\begin{algorithm}[tb]
   \caption{SOGram (Stochastic Online Gramian)}
   \label{alg:sogram}
\begin{algorithmic}[1]
    \STATE {\bfseries Input:} Training data $\{(\xf_i, \yf_i, s_i)\}_{i \in \{1, \dots, n\}}$, learning rates $\eta > 0$, $\alpha \in (0, 1)$.
    \STATE {\bfseries Initialization phase}
    \begin{ALC@g}
        \STATE draw $\theta$ randomly
        \STATE $\hatGx, \hatGy \leftarrow 0^{k\times k}$
    \end{ALC@g}
    \REPEAT
        \STATE Update Gramian estimates ($i \sim \text{Uniform}(n)$)
        \begin{ALC@g}
            \STATE $\hatGx \leftarrow (1 - \alpha)\hatGx + \alpha \ux_i(\theta)\otimes \ux_i(\theta)$, \quad $\hatGy \leftarrow (1 - \alpha)\hatGy + \alpha \uy_i(\theta)\otimes \uy_i(\theta)$
        \end{ALC@g}
        \STATE Update model parameters ($i \sim \text{Uniform}(n)$)
        \begin{ALC@g}
            \STATE $\theta \leftarrow \theta - \eta \nabla_\theta [f_i(\theta) + \lambda \hatg_i(\theta, \hatGx, \hatGy)]$ 
        \end{ALC@g}
    \UNTIL{stopping criterion}
\end{algorithmic}
\end{algorithm}
\vspace{-.1in}

\subsection{Comparison with sampling methods}
\label{sec:discussion}
We conclude this section by observing that traditional sampling methods can be recast in terms of the Gramian formulation~\eqref{eq:gravity_dot}, and implementing them in this form can decrease their computional complexity in the large batch regime. Indeed, suppose a batch $B \subset \{1, \dots, n\}$ is sampled, and the gravity term $g(\theta)$ is approximated by
\begin{equation}
\label{eq:sampling_naive}
\tilde g(\theta) = \frac{1}{|B|^2} \sum_{i \in B} \sum_{j \in B} \braket{u_i(\theta)}{v_j(\theta)}^2.
\end{equation}
Then applying a similar transformation to Section~\ref{sec:gram_formulation}, one can show that
{\begin{equation}
\label{eq:sampling_gram}
\tilde g(\theta) = \Big\langle\frac{1}{|B|}\sum_{i \in B} \rankone{u_i(\theta)}, \frac{1}{|B|}\sum_{j \in B} \rankone{v_j(\theta)}\Big\rangle.
\end{equation}}%
The double-sum formulation~\eqref{eq:sampling_naive} involves a sum of $|B|^2$ inner products of vectors in $\Rbb^k$, thus computing its gradient costs $\Ocal(k |B|^2)$. The Gramian formulation~\eqref{eq:sampling_gram}, on the other hand, is the inner product of two $k \times k$ matrices, each involving a sum of $|B|$ terms, thus computing the gradient in this form costs $\Ocal(k^2|B|)$, which can give significant computational savings when $|B|$ is larger than the embedding dimension $k$, a common situation in practice. Incidentally, given expression~\eqref{eq:sampling_gram}, sampling methods can be interpreted as implicitly computing Gramian estimates, using a sum of rank-1 terms over the batch. Intuitively, one advantage of SOGram and SAGram is that they take into account many more embeddings (by caching or online averaging) than is possible using plain sampling.

\section{Experiments}
\label{sec:experiments}
In this section, we conduct large-scale experiments on the Wikipedia dataset~\citep{wikipedia}. Additional experiments on the MovieLens dataset~\citep{harper2015movielens} are given in Appendix~\ref{app:movielens}.

\subsection{Experimental setup}
\label{sec:exp-setup}
{\bf Datasets} We consider the problem of learning the intra-site links between Wikipedia pages. Given a pair of pages $(x, y) \in \Xcal \times \Xcal$, the target similarity is $1$ if there is a link from $x$ to $y$, and $0$ otherwise. Here a page is represented by a feature vector $x = (x_{\fid}, x_{\fngrams}, x_{\fcats})$, where $x_{\fid}$ is (a one-hot encoding of) the page URL, $x_{\fngrams}$ is a bag-of-words representation of the set of n-grams of the page's title, and $x_\fcats$ is a bag-of-words representation of the categories the page belongs to. Note that the left and right feature spaces coincide in this case, but the target similarity is not necessarily symmetric (the links are directed edges).
We carry out our experiments on subsets of the Wikipedia graph corresponding to three languages: Simple English, French, and English, denoted respectively by \texttt{simple}, \texttt{fr}, and \texttt{en}. These subgraphs vary in size, and Table~\ref{tbl:dataset} shows some basic statistics for each set. Each set is partitioned into training and validation using a (90\%, 10\%) split.

\begin{table}[h]
\centering
{\small\begin{tabular}{l|r|r|r|r}
\hline
language & \# pages & \# links & \# ngrams & \# cats \\
 \hline
 \texttt{simple} & 85K & 4.6M &  8.3K & 6.1K\\
 \hline
 \texttt{fr} & 1.8M &  142M & 167.4K & 125.3K \\
 \hline
 \texttt{en} & 5.3M & 490M & 501.0K & 403.4K\\
 \hline
 \end{tabular}}%
  \vspace{.05in}
 \caption{Corpus sizes for each training set.}\label{tbl:dataset}
 \vspace{-.1in}
\end{table}

{\bf Model} We train a non-linear embedding model consisting of a two-tower neural network as in Figure~\ref{fig:model}, where the left and right embedding functions map, respectively, the source and destination page features. Both networks have the same structure: the input feature embeddings are concatenated then mapped through two hidden layers with ReLU activations. The input feature embeddings are shared between the two networks, and their dimensions are $50$ for \texttt{simple}, $100$ for \texttt{fr}, and $120$ for \texttt{en}. The sizes of the hidden layers are $[256, 64]$ for \texttt{simple} and $[512, 128]$ for \texttt{fr} and \texttt{en}.

{\bf Training} The model is trained using SAGram, SOGram, and batch negative sampling as a baseline. We use a learning rate $\eta = 0.01$ and a gravity coefficient $\lambda = 10$ (cross-validated). All of the methods use a batch size $1024$. For SAGram and SOGram, a batch $B$ is used in the Gramian updates (line 8 in Algorithm~\ref{alg:sagram} and line 7 in Algorithm~\ref{alg:sogram}, where we use a sum of rank-1 terms over the batch), and another batch $B'$ is used in the gradient computation\footnote{We use two separate batches to ensure the independence assumption of Proposition~\ref{prop:unbiased}}. For the sampling method, the gravity term is approximated by all cross-pairs $(i, j) \in B \times B'$, and for efficiency, we implement it using the Gramian formulation as discussed in Section~\ref{sec:discussion}, since we operate in a regime where the batch size is an order of magnitude larger than the embedding dimension $k$ (equal to $64$ for \texttt{simple} and $128$ for \texttt{fr} and \texttt{en}).

\begin{figure}[h]
\centering
\includegraphics[width=\textwidth]{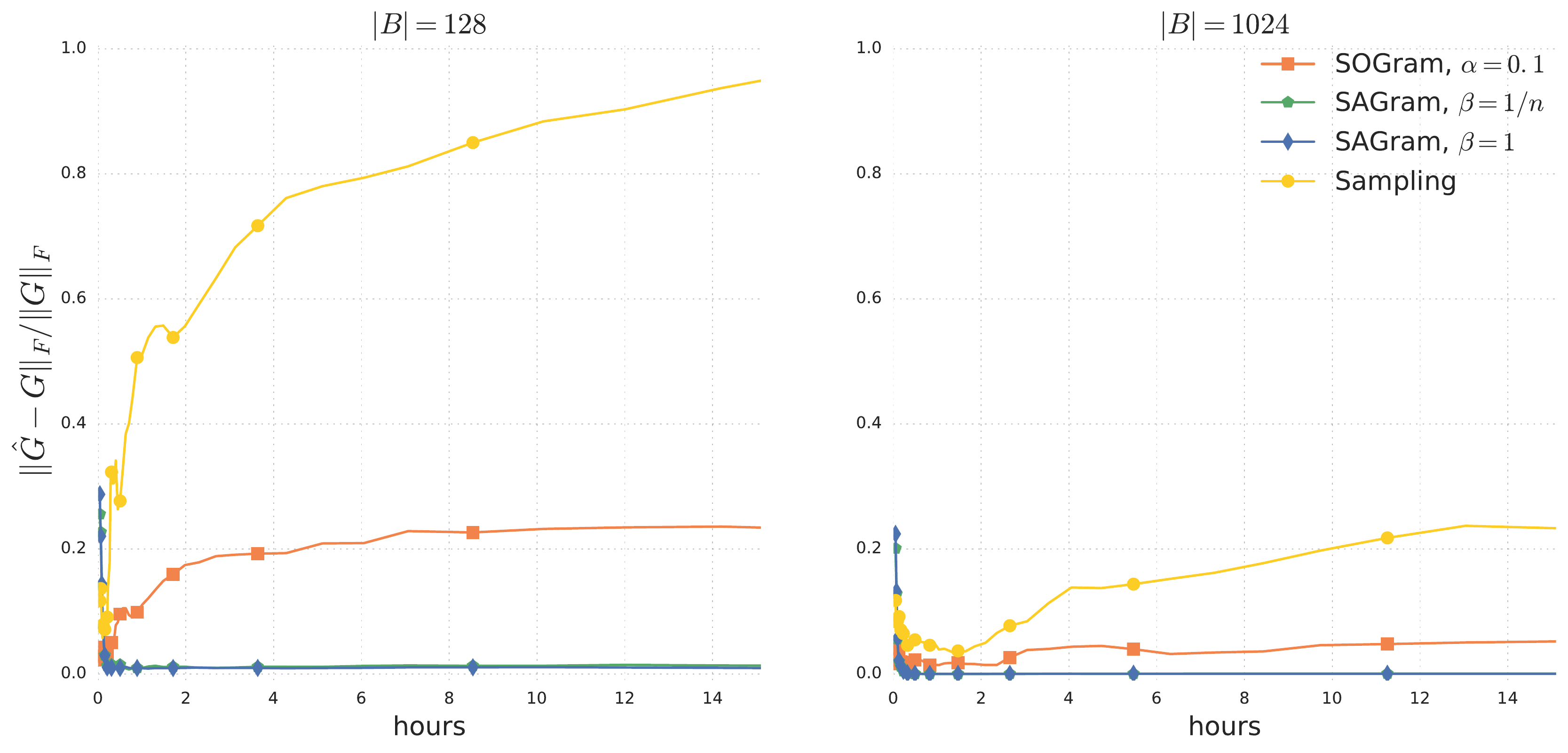}%
   \caption{Gramian estimation error on a common trajectory $(\theta^{(t)})$, for SAGram, SOGram and sampling.}
\label{fig:gram_estimation}
\end{figure}

\subsection{Quality of Gramian estimates}
In the first set of experiments, we evaluate the quality of the Gramian estimates using each method. In order to have a meaningful comparison, we fix a trajectory of model parameters $(\theta^{(t)})_{t \in \{1, \dots, T\}}$, and evaluate how well each method tracks the true Gramians $\Gx(\theta^{(t)}), \Gy(\theta^{(t)})$ on that common trajectory. This experiment is done on \texttt{simple}, the smallest of the datasets, so that we can compute the exact Gramians by periodically computing the embeddings $u_i(\theta^{(t)}), v_i(\theta^{(t)})$ on the full training set at a given time~$t$. We report the estimation error for each method, measured by the normalized Frobenius distance $\frac{\|\hatGx^{(t)} - \Gx(\theta^{(t)})\|_F}{\|\Gx(\theta^{(t)})\|_F}$ in Figure~\ref{fig:gram_estimation}. We can observe that both variants of SAGram yield the best estimates, and that SOGram yields better estimates than sampling. We also vary the batch size to evaluate its impact: increasing the batch size from 128 to 1024 improves the quality of all estimates, as expected. It is worth noting that the estimates of SOGram with $|B| = 128$ have comparable quality to sampling estimates with $|B| = 1024$.

In Figure~\ref{fig:gram_tradeoff}, we evaluate the bias-variance tradeoff discussed in Section~\ref{sec:estimation-online}, by comparing the estimates of SOGram with different learning rates~$\alpha$. We observe that for the initial iterations, higher values of $\alpha$ yield better estimates, but as training progresses, the errors decay to a lower value for lower $\alpha$ (observe in particular how all the plots intersect). This is consistent with the results of Proposition~\ref{prop:bias-variance}: higher values of~$\alpha$ induce higher variance which persists throughout training, while a lower value of $\alpha$ reduces the variance but introduces a bias, which is mostly visible during the early iterations, but decreases as the trajectory converges. We further study the SOGram estimates on the larger datasets in Appendix~\ref{app:gramian_experiments}.

\begin{figure}[h]
\centering
\includegraphics[width=.51\textwidth]{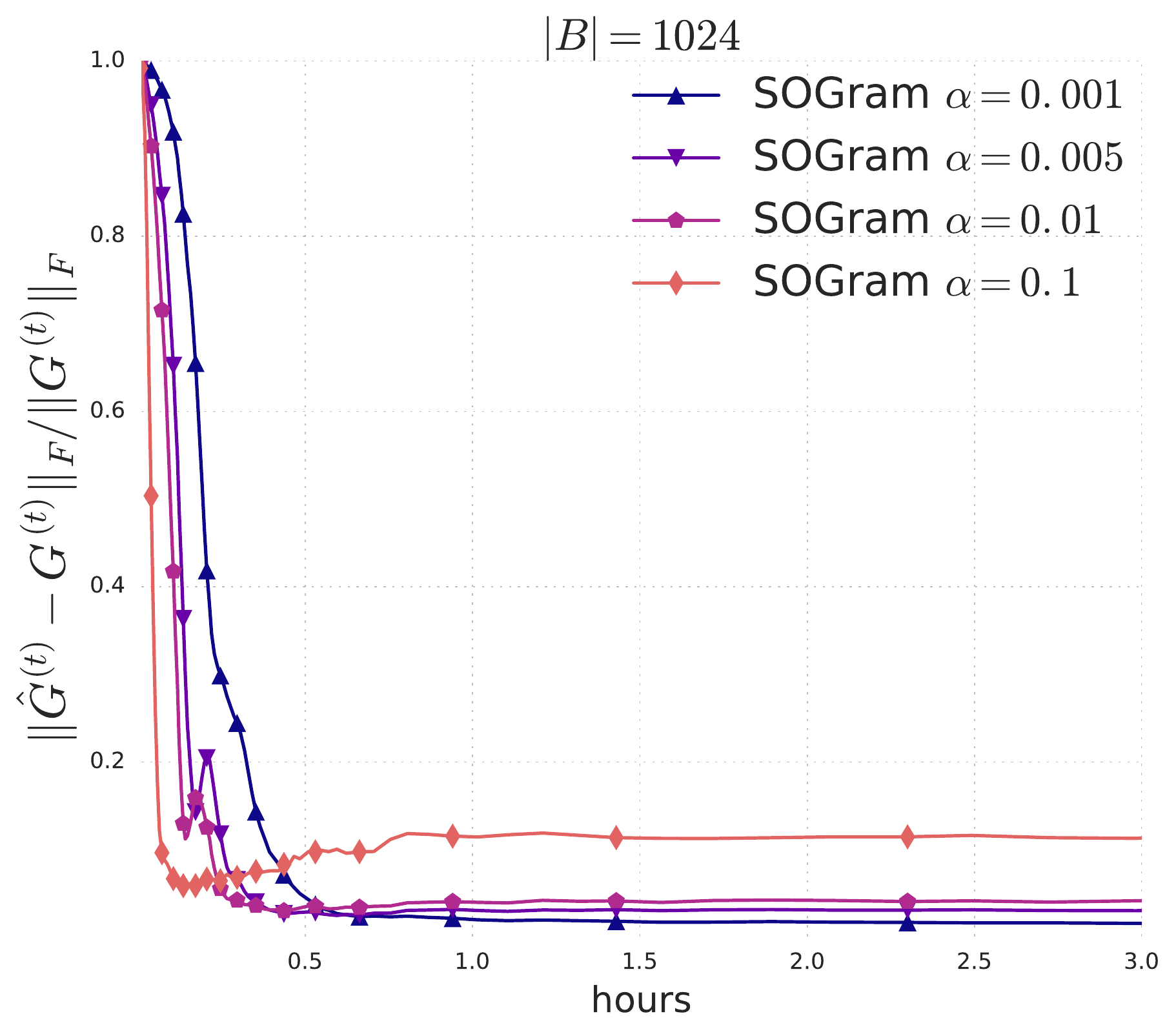}%
   \caption{Gramian estimation error of SOGram, for different values of $\alpha$.}
\label{fig:gram_tradeoff}
\end{figure}

\subsection{Impact on training speed and generalization quality}
In order to evaluate the impact of the Gramian estimation quality on training speed and generalization quality, we compare the validation performance of batch sampling and SOGram with different Gramian learning rates $\alpha$, on each dataset (we do not use SAGram due to its prohibitive memory cost for corpus sizes of 1M or more). We estimate the mean average precision (MAP) at 10, by periodically (every 5 minutes) scoring left items in the validation set against 50K random candidates -- exhuastively scoring all candidates is prohibitively expensive at this scale, but this gives a reasonable approximation.
\begin{figure}[h!]
\centering
\def\wdt{.49\textwidth}
\includegraphics[width=\wdt]{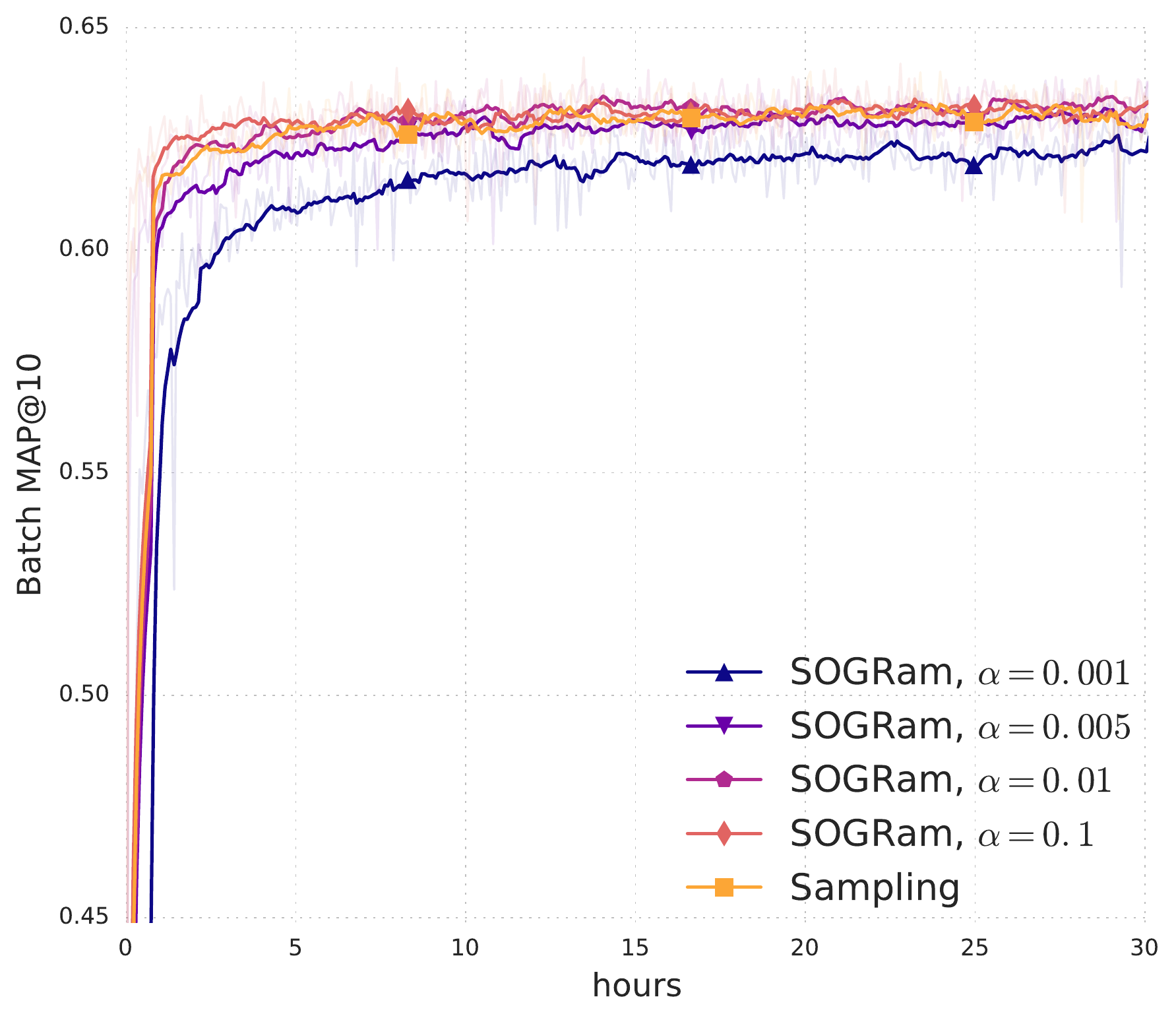}
\includegraphics[width=\wdt]{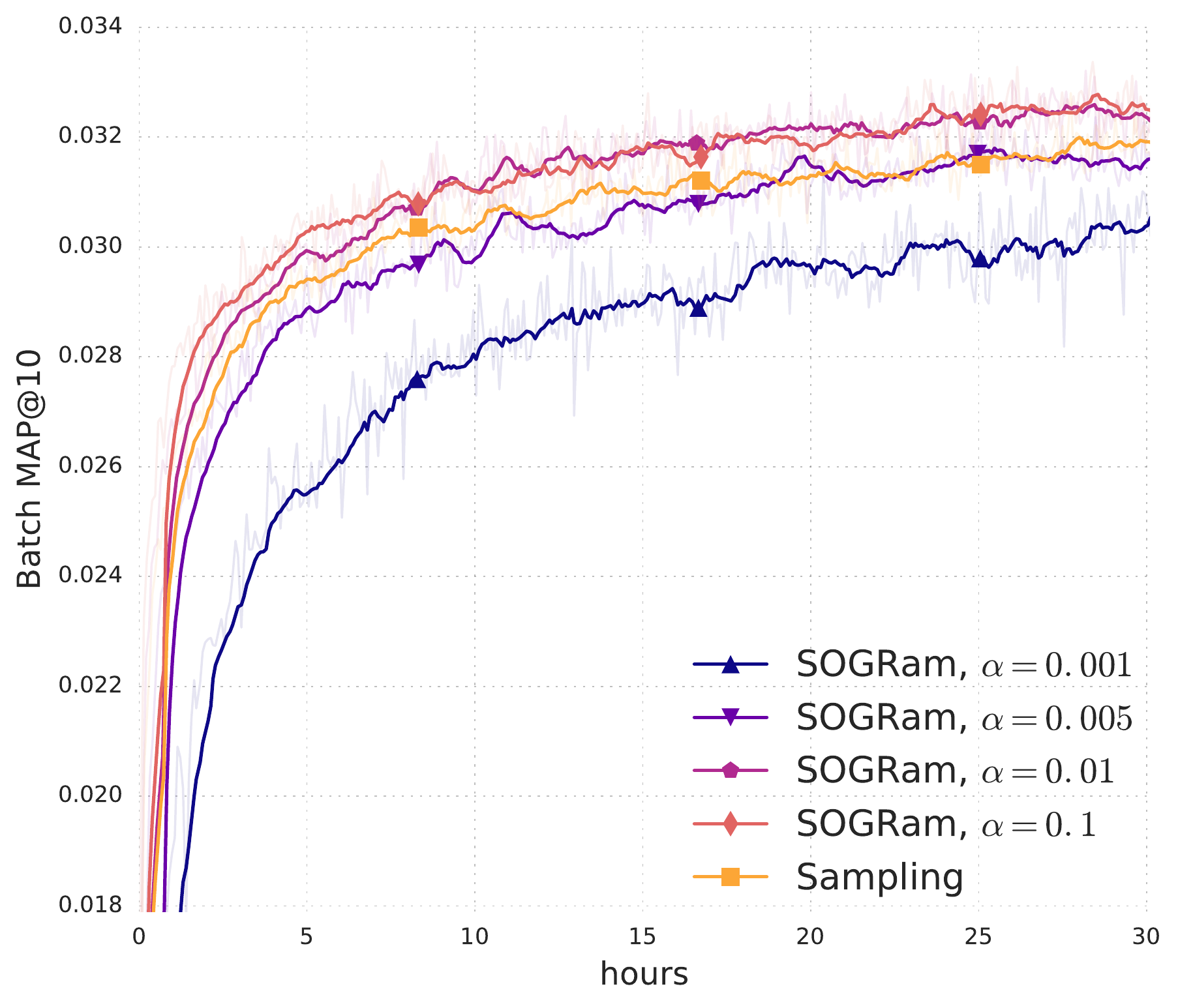}
\includegraphics[width=\wdt]{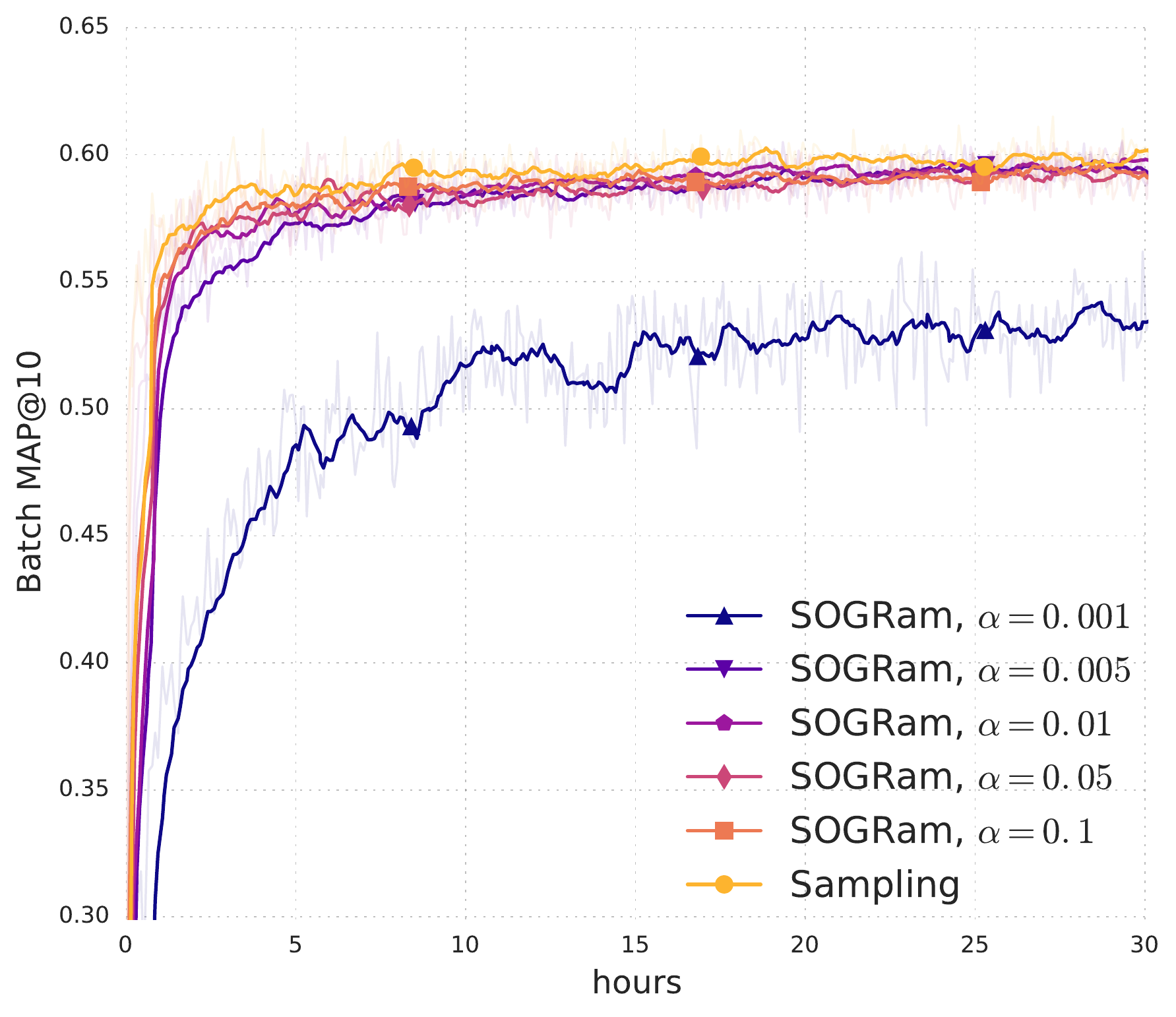}
\includegraphics[width=\wdt]{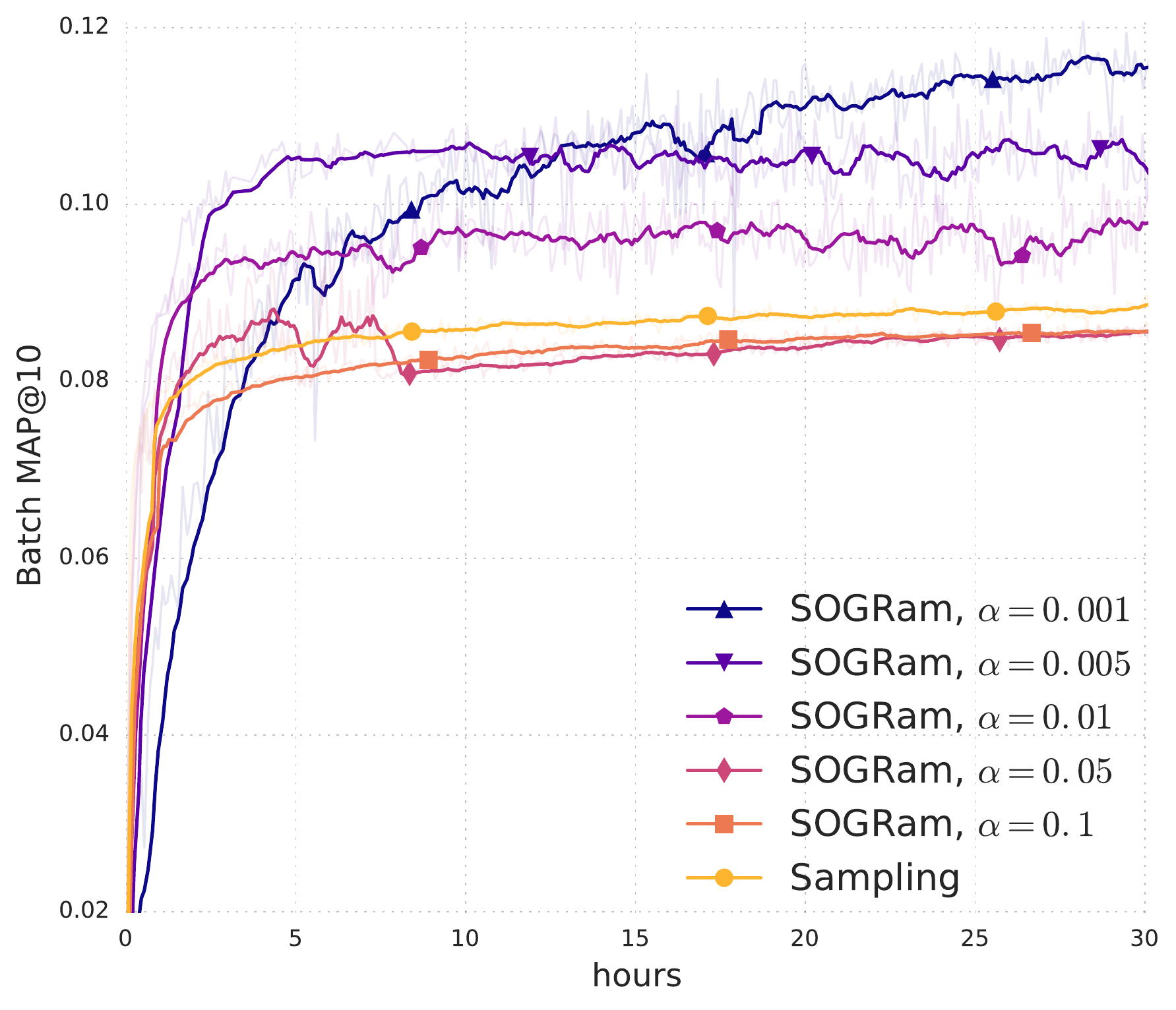}
\includegraphics[width=\wdt]{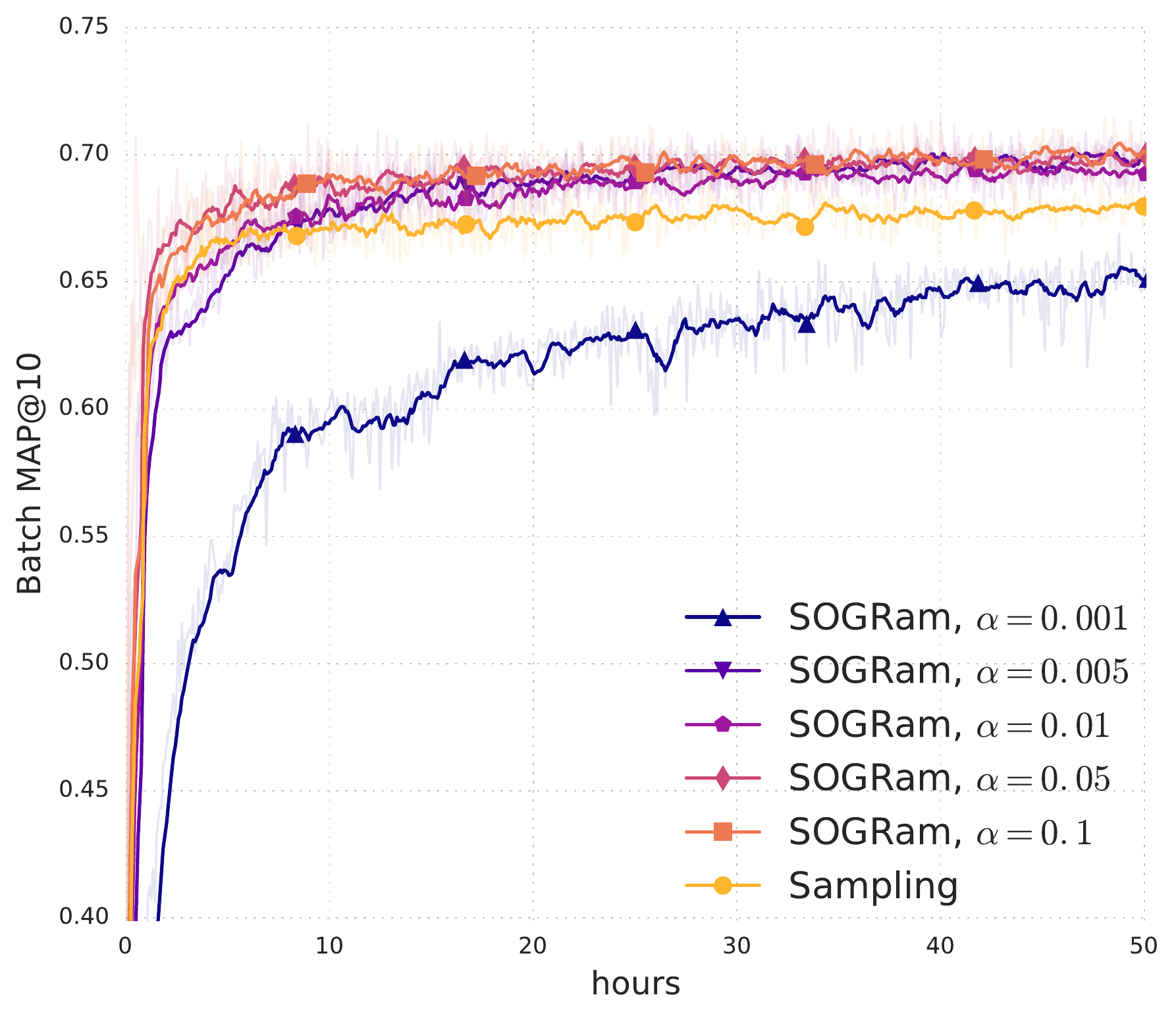}
\includegraphics[width=\wdt]{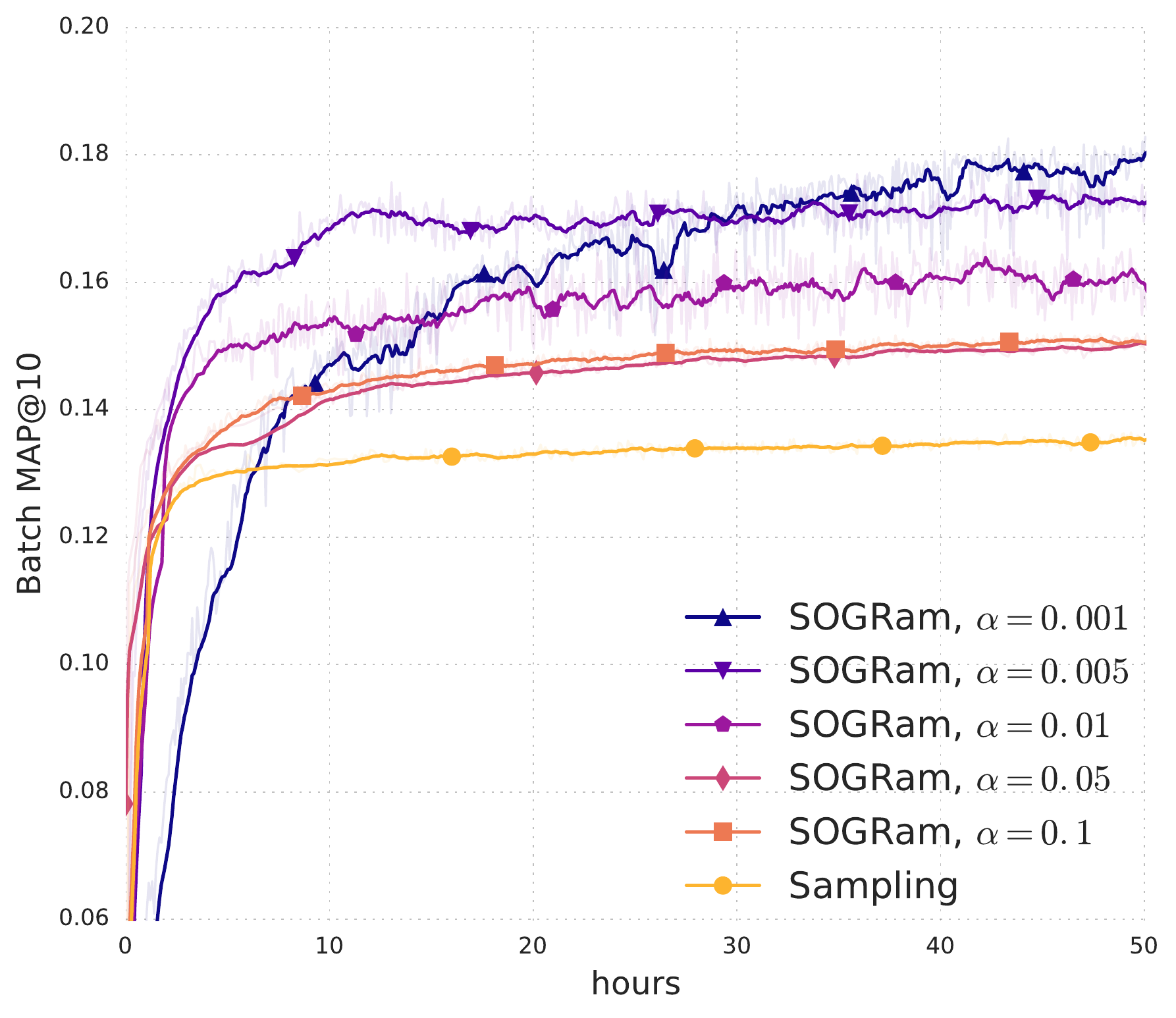}
\caption{Mean average precision at 10 on the training set (left), and the validation set (right), for different methods, on \texttt{simple} (top), \texttt{fr} (middle), and \texttt{en} (bottom).}
\vspace{-.1in}
\label{fig:validation-MAP}
\end{figure}

The results are reported in Figure~\ref{fig:validation-MAP}. While SOGram does not improve the MAP on the training set compared to the baseline sampling method, it consistently achieves the best validation performance, by a large margin for the larger sets. This discrepancy between training and validation can be explained by the fact that the gravity term $g(\theta)$ has a regularizing effect, and by better estimating this term, SOGram improves generalization. Table~\ref{tbl:validation} summarizes the relative improvement of the final validation MAP.

\begin{table}[h]
\centering
{\footnotesize\begin{tabular}{l|c|c|c|c|c}
\hline
language & Sampling & SOGram (0.001) & SOGram (0.005) &  SOGram (0.01) & SOGram (0.1)\\
 \hline
 \texttt{simple} & 0.0319 & 0.0306 (-4.0\%) & 0.0317 (-0.6\%) & \bf 0.0325 (+1.8\%) & 0.0324 (+1.5\%) \\
 \hline
 \texttt{fr} & 0.0886 & \bf 0.1158 (+30.7 \%) & 0.1049 (+18.4 \%) & 0.0983 (+10.9 \%) & 0.0857 (-3.3 \%)\\
 \hline
 \texttt{en} & 0.1352 & \bf 0.1801 (+33.2 \%) & 0.1725 (+27.6 \%) & 0.1593 (+17.8 \%) & 0.1509 (+11.6 \%)\\
 \hline
 \end{tabular}}%
 \vspace{.05in}
 \caption{Final validation MAP on each dataset, and relative improvement compared to batch sampling.}\label{tbl:validation}
 \vspace{-.1in}
\end{table}

The improvement on \texttt{simple} is modest (1.8\%), which can be explained by the relatively small corpus size (85K unique pages), in which case the baseline sampling already yields decent estimates. On the larger corpora, we obtain a much more significant improvement of 30.7\% on \texttt{fr} and 33.2\% on \texttt{en}. The plots for \texttt{en} and \texttt{fr} also reflect the bias-variance tradeoff dicussed in Proposition~\ref{prop:bias-variance}: with a lower $\alpha$, progress is initially slower (due to the bias introduced in the Gramian estimates), but the final performance is better. Given a limited training time budget, one may prefer a higher $\alpha$, and it is worth observing that with $\alpha = 0.01$ on \texttt{en}, SOGram achieves a better performance under 2 hours of training, than batch sampling in 50 hours. This tradeoff also motivates the use of decaying Gramian learning rates, which we leave for future experiments.

\section{Conclusion}
\label{sec:conclusion}
We showed that the Gramian formulation commonly used in low-rank matrix factorization can be leveraged for training non-linear embedding models, by maintaining estimates of the Gram matrices and using them to estimate the gradient. By applying variance reduction techniques to the Gramians, one can improve the quality of the gradient estimates, without relying on large sample size as is done in traditional sampling methods. This leads to a significant impact on training time and generalization quality, as indicated by our experiments. An important direction of future work is to extend this formulation to a larger family of penalty functions, such as the spherical loss family studied in~\citep{vincent2015efficient,brebisson2016exploration}.

\bibliographystyle{abbrvnat}
\bibliography{bib}

\newpage
\appendix
\section{Proofs}
\setcounter{proposition}{0}
\begin{proposition}
If $i$ is drawn uniformly in $\{1, \dots, n\}$, and $\hatGx, \hatGy$ are unbiased estimates of $\Gx(\theta), \Gy(\theta)$ and independent of $i$, then $\nabla_\theta \hatg_i(\theta, \hatGx, \hatGy)$ is an unbiased estimate of $\nabla g(\theta)$.
\end{proposition}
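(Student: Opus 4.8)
The plan is to reduce everything to the bilinear structure $g(\theta) = \langle G_u(\theta), G_v(\theta)\rangle$ from \eqref{eq:gravity_dot}, and to exploit the fact that in $\hat g_i$ (defined in \eqref{eq:gravity_estimate}) the estimates $\hat G_u, \hat G_v$ are treated as \emph{constants} with respect to $\theta$, so that $\nabla_\theta \hat g_i$ differentiates only through the embeddings $u_i(\theta), v_i(\theta)$. First I would take the expectation of $\nabla_\theta \hat g_i(\theta, \hat G_u, \hat G_v)$ over the two independent sources of randomness: the index $i \sim \text{Uniform}(n)$ and the estimates $\hat G_u, \hat G_v$. Because $\hat G_u, \hat G_v$ do not depend on $\theta$ and each quadratic form $\langle u_i(\theta), \hat G_v u_i(\theta)\rangle$ is \emph{linear} in $\hat G_v$, the map $\hat G_v \mapsto \nabla_\theta \langle u_i(\theta), \hat G_v u_i(\theta)\rangle$ is linear, which justifies interchanging expectation and gradient. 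Combining independence of $(\hat G_u, \hat G_v)$ from $i$ with the unbiasedness hypotheses $\mathbb{E}[\hat G_u] = G_u(\theta)$ and $\mathbb{E}[\hat G_v] = G_v(\theta)$ then yields
\[
\mathbb{E}[\nabla_\theta \hat g_i] = \mathbb{E}_i\big[\nabla_\theta \langle u_i(\theta), G_v(\theta) u_i(\theta)\rangle + \nabla_\theta \langle v_i(\theta), G_u(\theta) v_i(\theta)\rangle\big],
\]
where inside each gradient the true Gramian is held fixed, not differentiated.

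Next I would compute $\nabla g(\theta)$ directly from \eqref{eq:gravity_dot} by the product rule, writing $\nabla_\theta \langle G_u(\theta), G_v(\theta)\rangle$ as the sum of a term differentiating $G_u$ with $G_v$ frozen and a term differentiating $G_v$ with $G_u$ frozen. Using the definition $G_u(\theta) = \frac{1}{n}\sum_i u_i(\theta) \otimes u_i(\theta)$ from \eqref{eq:gramian_definition} and the elementary identity $\langle a\otimes a, M\rangle = \langle a, M a\rangle$, the first term expands to $\frac{1}{n}\sum_i \nabla_\theta \langle u_i(\theta), G_v(\theta) u_i(\theta)\rangle$ and the second to $\frac{1}{n}\sum_i \nabla_\theta \langle v_i(\theta), G_u(\theta) v_i(\theta)\rangle$, again with the frozen Gramian treated as constant. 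Since $i$ is uniform, $\frac{1}{n}\sum_i$ equals $\mathbb{E}_i$, so these two expressions coincide exactly with the right-hand side obtained above, establishing $\mathbb{E}[\nabla_\theta \hat g_i] = \nabla g(\theta)$.

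The step I expect to require the most care is the bookkeeping of \emph{what is held constant}: the gravity term $g$ depends on $\theta$ through \emph{both} Gramians, so its exact gradient necessarily splits into two pieces (one per Gramian), whereas the estimator $\hat g_i$ freezes both Gramians and compensates by pairing $\hat G_v$ with $u_i$ and $\hat G_u$ with $v_i$. Verifying that the product-rule decomposition of $\nabla g$ matches this cross arrangement — and that the factors of $2$ arising from the symmetric quadratic forms are consistent on both sides — is the crux of the argument. The interchange of expectation and differentiation itself is routine, justified by linearity of each quadratic form in the $\theta$-independent Gramian estimate together with a standard differentiation-under-the-expectation argument, using the assumption that the embeddings $u_i, v_i$ are differentiable in $\theta$.
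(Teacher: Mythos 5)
Your proposal is correct and follows essentially the same route as the paper's proof: both compute $\nabla g$ by the product rule on $\langle G_u(\theta), G_v(\theta)\rangle$ (the paper phrases the ``frozen-Gramian'' terms as Jacobian contractions $J_u(\theta)[G_v(\theta)] + J_v(\theta)[G_u(\theta)]$), identify $\nabla_\theta \hat g_i$ with the corresponding per-example contractions $J_{u,i}(\theta)[\hat G_v] + J_{v,i}(\theta)[\hat G_u]$, and conclude by linearity in the Gramian estimates together with independence from $i$ and $\mathbb{E}_i[J_{u,i}] = J_u$. Your bookkeeping of the cross pairing ($\hat G_v$ with $u_i$, $\hat G_u$ with $v_i$) matches the definition in \eqref{eq:gravity_estimate}, so there is no gap.
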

\begin{proof}
Starting from the expression~\eqref{eq:gravity} of $g(\theta) = \braket{G_u(\theta)}{G_v(\theta)}$, and applying the chain rule, we have
\begin{align}
\nabla g(\theta) 
&= \nabla \braket{\Gx(\theta)}{\Gy(\theta)} \notag \\
&= J_\ux(\theta)[\Gy(\theta)] + J_\uy(\theta)[\Gx(\theta)], \label{eq:gravity_gradient}
\end{align}
where $J_\ux(\theta)$ denotes the Jacobian of $\Gx(\theta)$, an order-three tensor given by
\begin{align*}
J_\ux(\theta)_{l, i, j} = \frac{\partial \Gx(\theta)_{i, j}}{\partial \theta_l}, && l \in \{1, \dots, d\}, i, j \in \{1, \dots, n\},
\end{align*} and $J_\ux(\theta)[\Gy(\theta)]$ denotes the vector $[\sum_{i, j} J_\ux(\theta)_{l, i, j} \Gy(\theta)_{i, j}]_{l \in \{1, \dots, d\}}$.

Observing that $\hatg_i(\theta, \hat G_u, \hat G_v) = \braket{\hat G_u}{u_i(\theta)\otimes u_i(\theta)} + \braket{\hat G_v}{v_i(\theta) \otimes v_i(\theta)}$, and applying the chain rule, we have
\begin{equation}
\label{eq:gravity_estimate_gradient}
\nabla_\theta \hatg_i(\theta, \hatGx, \hatGy) 
= J_{u, i}(\theta)[\hatGy] + J_{\uy, i}(\theta)[\hatGx],
\end{equation}
where $J_{\ux, i}(\theta)$ is the Jacobian of $\ux_i(\theta) \otimes \ux_i(\theta)$, and
\[
\Exp_{i \sim \text{Uniform}}[J_{\ux, i}(\theta)] = \frac{1}{n} \sum_{i = 1}^n J_{u, i}(\theta) = J_u(\theta),
\]
an similarly for $J_{\uy, i}$. We conclude by taking expectations in~\eqref{eq:gravity_estimate_gradient} and using assumption that $\hat G_u, \hat G_v$ are independent of $i$.
\end{proof}

\begin{proposition}
Suppose $\beta = \frac{1}{n}$ in~\eqref{eq:sagram_estimate}. Then for all $t$, $\hatGx^{(t)}, \hatGy^{(t)}$ remain in $\Scal^k_+$.
\end{proposition}
\begin{proof}
From~\eqref{eq:sagram_estimate} and the definition of ${\hat S_\ux}^{(t)}$, we have,
\[
\hatGx^{(t)} = \frac{1}{n} \sum_{j \neq i} \hatux^{(t)}_j\otimes \hatux^{(t)}_j + \frac{1}{n} \ux_i(\theta^{(t)}) \otimes \ux_i(\theta^{(t)}),
\]
which is a sum of matrices in the PSD cone $\Scal^k_+$.
\end{proof}

\begin{proposition}
Suppose $\beta = 1$ in~\eqref{eq:sagram_estimate}. Then for all $t$, $\hatGx^{(t)}$ is an unbiased estimate of $\Gx(\theta^{(t)})$, and similarly for $ \hatGy^{(t)}$.
\end{proposition}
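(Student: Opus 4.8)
The plan is to compute the conditional expectation of $\hatGx^{(t)}$ over the single uniformly-drawn index $i$, holding everything else fixed, and to observe that the control-variate correction cancels exactly. First I would fix the iteration $t$ and condition on all the randomness generated before the Gramian update at line~7, so that the current parameters $\theta^{(t)}$, the cached embeddings $\{\hatux^{(t)}_j\}_{j=1}^n$, and hence the cached Gramian $\hat S_{\ux}^{(t)} = \frac{1}{n}\sum_{j=1}^n \hatux^{(t)}_j \otimes \hatux^{(t)}_j$ are all deterministic; the only remaining source of randomness is the fresh index $i \sim \text{Uniform}(n)$, which is independent of these quantities.

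With $\beta = 1$, the estimate \eqref{eq:sagram_estimate} reads $\hatGx^{(t)} = \hat S_{\ux}^{(t)} + \ux_i(\theta^{(t)}) \otimes \ux_i(\theta^{(t)}) - \hatux^{(t)}_i \otimes \hatux^{(t)}_i$. Taking the expectation over $i$ and using linearity, I would evaluate the two stochastic terms separately. Since $i$ is uniform on $\{1, \dots, n\}$, we have $\Exp_i[\ux_i(\theta^{(t)}) \otimes \ux_i(\theta^{(t)})] = \frac{1}{n}\sum_{i=1}^n \ux_i(\theta^{(t)}) \otimes \ux_i(\theta^{(t)}) = \Gx(\theta^{(t)})$ by the definition \eqref{eq:gramian_definition}, and likewise $\Exp_i[\hatux^{(t)}_i \otimes \hatux^{(t)}_i] = \frac{1}{n}\sum_{i=1}^n \hatux^{(t)}_i \otimes \hatux^{(t)}_i = \hat S_{\ux}^{(t)}$. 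Substituting, the deterministic offset $\hat S_{\ux}^{(t)}$ and the expected correction $\hat S_{\ux}^{(t)}$ cancel, leaving $\Exp_i[\hatGx^{(t)}] = \Gx(\theta^{(t)})$, which is the claim. The identical argument applied to the right embeddings $\uy$ gives the statement for $\hatGy^{(t)}$.

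There is essentially no analytic obstacle here: the result is a one-line cancellation that mirrors the standard unbiasedness of the SAGA gradient estimator, with outer products playing the role of individual gradients. The only point requiring care is the probabilistic bookkeeping. One must be explicit that $\hat S_{\ux}^{(t)}$ and the caches are measurable with respect to the history preceding the draw of $i$, so that they may be pulled out of the expectation, and that the \emph{same} uniform index enters both the fresh term $\ux_i \otimes \ux_i$ and the cached term $\hatux_i \otimes \hatux_i$, which is precisely what makes the correction a valid mean-zero control variate. I would state this conditioning at the outset to keep the computation clean.
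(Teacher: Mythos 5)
Your proof is correct and follows essentially the same route as the paper's: condition on the history so that $\theta^{(t)}$, the caches, and $\hat S_u^{(t)}$ are fixed, then average over the uniform index $i$ so that the correction term has mean zero and the cached Gramian cancels, leaving $G_u(\theta^{(t)})$. The only cosmetic difference is that the paper phrases the conditioning via the filtration generated by $(\theta^{(t)})$, whereas you spell out the measurability of the caches directly; the computation is identical.
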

\begin{proof}
Denoting by $(\Fcal_t)_{t \geq 0}$ the filtration generated by the sequence $(\theta^{(t)})_{t \geq 0}$, and taking conditional expectations in~\eqref{eq:sagram_estimate}, we have
\begin{align*}
\mathbb E[\hatGx^{(t)}|\Fcal_t] 
&= {\hat S_\ux}^{(t)} + \underset{i\sim \text{Uniform}}{\mathbb E} [\ux_i(\theta^{(t)}) \otimes \ux_i(\theta^{(t)}) - \hatux_i^{(t)} \otimes \hatux_i^{(t)} | \Fcal_t] \\
&= {\hat S_\ux}^{(t)} + \frac{1}{n} \sum_{i = 1}^n [\ux_i(\theta^{(t)}) \otimes \ux_i(\theta^{(t)}) - \hatux_i \otimes \hatux_i] \\
&= \frac{1}{n} \sum_{i = 1}^n \ux_i(\theta^{(t)}) \otimes \ux_i(\theta^{(t)}) = G_u(\theta^{(t)}).
\end{align*}
\end{proof}

\begin{proposition}
$(\theta, \hatGx, \hatGy)\in \Rbb^d \times \Scal^k_+ \times \Scal^k_+$ is a first-order stationary point for~\eqref{eq:game} if and only if $\theta$ is a first-order stationary point for problem~\eqref{eq:objective} and $\hatGx = \Gx(\theta), \hatGy = \Gy(\theta)$.
\end{proposition}

\begin{proof}
$(\theta, \hatGx, \hatGy) \in \Rbb^d \times \Scal^k_+ \times \Scal^k_+$ is a first-order stationary point  of the game if and only if
\begin{align}
&\nabla f(\theta) +\lambda (J_\ux(\theta)[\hatGy] + J_\uy(\theta)[\hatGx]) = 0 \label{eq:equiv_proof_1} \\
&\braket{\hatGx - \Gx(\theta)}{G' - \hatGx} \geq 0, \quad \forall G' \in \Scal^k_+ \label{eq:equiv_proof_2} \\
&\braket{\hatGy - \Gy(\theta)}{G' - \hatGy} \geq 0, \quad \forall G' \in \Scal^k_+ \label{eq:equiv_proof_3}
\end{align}
The second and third conditions simply states that $\nabla_{\hatGx} L_2^\theta(\hatGx, \hatGy)$ and $\nabla_{\hat G_v}L_2^\theta(\hat G_u, \hat G_v)$ define supporting hyperplanes of $\Scal^k_+$ at $\hatGx, \hatGy$, respectively.

Since $\Gx(\theta) \in \Scal^k_+$, condition \eqref{eq:equiv_proof_2} is equivalent to $\hatGx = \Gx(\theta)$ (and similarly, \eqref{eq:equiv_proof_3} is equivalent to $\hatGy = \Gy(\theta)$). Using the expression~\eqref{eq:gravity_gradient} of $\nabla g$, we get that (\ref{eq:equiv_proof_1}-\ref{eq:equiv_proof_3}) is equivalent to $\nabla f(\theta) + \lambda\nabla g(\theta) = 0$.
\end{proof}

\begin{proposition}
Let $\bar G^{(t)}_u = \sum_{\tau = 1}^t \alpha(1-\alpha)^{t-\tau} G_u(\theta^{(\tau)})$. Suppose that there exist $ \sigma, \delta > 0$ such that for all~$t$, $\Exp_{i \sim \text{Uniform(n)}} \|u_i(\theta^{(t)}) \otimes u_i(\theta^{(t)}) - G_u(\theta^{(t)})\|_F^2 \leq \sigma^2$ and $\|G_\ux(\theta^{(t+1)}) - G_\ux(\theta^{(t)})\|_F \leq \delta$. Then $\forall t$,
\begin{align}
\label{eq:app-variance_bound}
\Exp \|\hat G^{(t)}_u - \bar G^{(t)}_u\|_F^2 &\leq \sigma^2 \frac{\alpha}{2 - \alpha}\\
\label{eq:app-bias_bound}
\|\bar G_\ux^{(t)} - G_\ux^{(t)}\|_F &\leq \delta (1/\alpha - 1) + (1-\alpha)^t \|G_\ux^{(t)}\|_F.
\end{align}
\end{proposition}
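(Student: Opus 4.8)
The plan is to work directly from the unrolled form $\hat G_u^{(t)} = \sum_{\tau=1}^t \alpha(1-\alpha)^{t-\tau}\, u_{i_\tau}(\theta^{(\tau)}) \otimes u_{i_\tau}(\theta^{(\tau)})$ derived by induction just before the statement, and to exploit that $\bar G_u^{(t)}$ is the same weighted combination with each rank-one term replaced by its conditional mean $G_u(\theta^{(\tau)})$. Writing $X_\tau := u_{i_\tau}(\theta^{(\tau)}) \otimes u_{i_\tau}(\theta^{(\tau)}) - G_u(\theta^{(\tau)})$, the difference is $\hat G_u^{(t)} - \bar G_u^{(t)} = \sum_{\tau=1}^t \alpha(1-\alpha)^{t-\tau} X_\tau$, so the variance bound~\eqref{eq:app-variance_bound} reduces to controlling the second moment of a weighted sum of the $X_\tau$, and the bias bound~\eqref{eq:app-bias_bound} is a purely deterministic estimate on $\bar G_u^{(t)} - G_u^{(t)}$.

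For the variance, the key observation is that $(X_\tau)$ is a martingale-difference sequence with respect to the filtration $\mathcal{F}_\tau$ encoding the trajectory and sampling history up to (but not including) the draw at step $\tau$: since $\theta^{(\tau)}$ is $\mathcal{F}_\tau$-measurable and $i_\tau$ is drawn uniformly and independently of $\mathcal{F}_\tau$, we have $\Exp[u_{i_\tau}(\theta^{(\tau)}) \otimes u_{i_\tau}(\theta^{(\tau)}) \mid \mathcal{F}_\tau] = G_u(\theta^{(\tau)})$, hence $\Exp[X_\tau \mid \mathcal{F}_\tau] = 0$. First I would use this to kill the cross terms: for $\tau < \tau'$, conditioning on $\mathcal{F}_{\tau'}$ and pulling the $\mathcal{F}_{\tau'}$-measurable factor $X_\tau$ out of the inner product gives $\Exp\langle X_\tau, X_{\tau'}\rangle = 0$. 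The second moment then collapses to the diagonal, $\Exp\|\hat G_u^{(t)} - \bar G_u^{(t)}\|_F^2 = \sum_{\tau=1}^t \alpha^2(1-\alpha)^{2(t-\tau)}\,\Exp\|X_\tau\|_F^2$, which I would bound by $\sigma^2$ times the geometric sum $\sum_{s=0}^{t-1}\alpha^2(1-\alpha)^{2s} \le \alpha^2/\big(1-(1-\alpha)^2\big) = \alpha/(2-\alpha)$, using the identity $1-(1-\alpha)^2 = \alpha(2-\alpha)$.

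For the bias, the plan is to note that the weights sum to $\sum_{\tau=1}^t \alpha(1-\alpha)^{t-\tau} = 1-(1-\alpha)^t$ rather than to one, and to split accordingly:
\[
\bar G_u^{(t)} - G_u^{(t)} = \sum_{\tau=1}^t \alpha(1-\alpha)^{t-\tau}\big(G_u(\theta^{(\tau)}) - G_u(\theta^{(t)})\big) - (1-\alpha)^t\, G_u(\theta^{(t)}).
\]
The second term contributes the $(1-\alpha)^t\|G_u^{(t)}\|_F$ summand directly. For the first (drift) term I would telescope the one-step Lipschitz hypothesis to get $\|G_u(\theta^{(\tau)}) - G_u(\theta^{(t)})\|_F \le (t-\tau)\delta$, and then evaluate $\alpha\delta\sum_{s=0}^{t-1} s(1-\alpha)^s \le \alpha\delta \cdot \tfrac{1-\alpha}{\alpha^2} = \delta(1/\alpha - 1)$ via the standard series $\sum_{s\ge 0} s x^s = x/(1-x)^2$ with $x = 1-\alpha$. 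The triangle inequality then combines the two pieces.

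The main obstacle is the uncorrelatedness step in the variance computation: everything rests on correctly identifying $G_u(\theta^{(\tau)})$ as the \emph{exact} conditional mean of the rank-one term given the past, which requires pinning down the filtration so that $\theta^{(\tau)}$ is measurable while the sampling index $i_\tau$ remains independent and uniform. Once this martingale-difference structure is established, the vanishing of cross terms and the remaining geometric-series bookkeeping are routine, and the bias half is entirely elementary given the Lipschitz assumption.
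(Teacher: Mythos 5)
Your proposal is correct and follows essentially the same route as the paper's proof: the same martingale-difference decomposition and geometric-series bound for the variance, and the same weight-deficit splitting plus telescoped Lipschitz bound for the bias. The only (welcome) difference is that you make explicit the filtration argument justifying the vanishing of cross terms, which the paper leaves implicit when it asserts that the second moment of the weighted sum equals the sum of weighted second moments.
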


\begin{proof}
We start by proving the first bound~\eqref{eq:app-variance_bound}. As stated in Section~\ref{sec:estimation-online}, we have, by induction on $t$, $\hat G_u^{(t)} = \sum_{\tau = 1}^t a_{t - \tau} u_{i_\tau}(\theta^{(t)}) \otimes u_{i_\tau}(\theta^{(t)})$, where $a_\tau = \alpha(1-\alpha)^\tau$. And by definition of $\bar G^{(t)}$, we have $\bar G_u^{(t)} = \sum_{\tau = 1}^t  a_{t -\tau} G_u(\theta^{(\tau)})$. Thus,
\begin{align*}
\hat G_\ux^{(t)} - \bar G_\ux^{(t)}
&= \sum_{\tau = 1}^t a_{t-\tau}\Delta_\ux^{(\tau)}
\end{align*}
where $\Delta_\ux^{(\tau)} = u_{i_\tau}(\theta^{(\tau)}) \otimes u_{i_\tau}(\theta^{(\tau)}) - G_\ux(\theta^{(\tau)})$ are zero-mean random variables. Thus, taking the second moment, and using the first assumption (which simply states that the variance of $\Delta_u^{(\tau)}$ is bounded by $\sigma^2$), we have
\begin{align*}
\Exp\|\hat G_u^{(t)} - \bar G_u^{(t)}\|_F^2
&= \Exp \left\|\sum_{\tau = 1}^t a_{t-\tau}\Delta_\ux^{(\tau)}\right\|_F^2 
= \sum_{\tau =1}^t a_{t-\tau}^2 \Exp \|\Delta_\ux^{(\tau)}\|_F^2 \\
&\leq \sigma^2 \alpha^2 \sum_{\tau = 0}^{t-1} (1-\alpha)^{2\tau}
= \sigma^2 \alpha^2 \frac{1 - (1-\alpha)^{2t}}{1 - (1-\alpha)^2} \\
&\leq \sigma^2 \frac{\alpha}{2 - \alpha},
\end{align*}
which proves the first inequality~\eqref{eq:app-variance_bound}.

To prove the second inequality, we start from the definition of $\bar G_u^{(t)}$:

\begin{align}
\|\bar G_\ux^{(t)} - G_\ux^{(t)}\|_F^{}
&= \| \sum_{\tau = 1}^t a_{t-\tau}(G_\ux^{(\tau)} - G_\ux^{(t)}) - (1-\alpha)^t G_\ux^{(t)}\|_F^{} \notag\\
&\leq \sum_{\tau = 1}^t a_{t-\tau} \|G_\ux^{(\tau)} - G_\ux^{(t)}\|_F + (1-\alpha)^t \|G_\ux^{(t)}\|_F^{}, \label{eq:bias_proof_1}
\end{align}
where the first equality uses that fact that $\sum_{\tau = 1}^t a_{t-\tau} = 1 - (1-\alpha)^t$. Focusing on the first term, and bounding $\|G_\ux^{(\tau)} - G_\ux^{(t)}\|_F \leq (t-\tau)\delta$ by the triangle inequality, we get
\begin{align}
\sum_{\tau = 1}^t a_{t-\tau} \|G_\ux^{(\tau)} - G_\ux^{(t)})\|_F
&\leq \delta \sum_{\tau = 1}^t a_{t-\tau}(t-\tau) 
= \delta \alpha \sum_{\tau = 0}^{t-1} \tau (1-\alpha)^\tau \notag\\
&= \delta \alpha(1-\alpha) \frac{d}{d\alpha} \left[-\sum_{\tau = 0}^{t-1} (1-\alpha)^\tau \right] \notag\\
& = \delta \alpha(1-\alpha) \frac{d}{d\alpha} \left[ -\frac{1-(1-\alpha)^t}{\alpha} \right] \notag\\
&\leq \delta \alpha(1-\alpha) \frac{1}{\alpha^2}. \label{eq:bias_proof_2}
\end{align}
Combining~\eqref{eq:bias_proof_1} and~\eqref{eq:bias_proof_2}, we get the desired inequality~\eqref{eq:app-bias_bound}.
\end{proof}



\section{Interpretation of the gravity term}
\label{app:gravity}
In this section, we briefly discuss different interpretations of the gravity term. Starting from the expression~\eqref{eq:gravity_dot} of $g(\theta)$ and the definition~\eqref{eq:gramian_definition} of the Gram matrices, we have
\begin{align}
g(\theta) = \braket{G_u(\theta)}{G_v(\theta)}
= \braket{\frac{1}{n} \sum_{i = 1}^n u_i(\theta) \otimes u_i(\theta)}{G_v(\theta)} = \frac{1}{n} \sum_{i = 1}^n \braket{u_i(\theta)}{G_v(\theta) u_i(\theta)},
\label{eq:gravity_quadratic}
\end{align}
which is a quadratic form in the left embeddings $u_i$ (and similarly for $v_j$, by symmetry). In particular, the partial derivative of the gravity term with respect to an embedding $u_i$ is
\begin{align*}
\frac{\partial g(\theta)}{\partial u_i} 
= \frac{2}{n} G_v(\theta) u_i(\theta)
= \frac{2}{n} \left[ \frac{1}{n}\sum_{j = 1}^n v_j(\theta)\otimes v_j(\theta)\right] u_i(\theta).
\end{align*}
Each term $(v_j \otimes v_j) u_i = v_j \braket{v_j}{u_i}$ is simply the projection of $u_i$ on $v_j$ (scaled by $\|v_j\|^2$). Thus the gradient of $g(\theta)$ with respect to $u_i$ is an average of scaled projections of $u_i$ on each of the right embeddings $v_j$, and moving in the direction of the negative gradient simply moves $u_i$ away from regions of the embedding space with a high density of left embeddings. This corresponds to the intuition discussed in the introduction: the purpose of the gravity term $g(\theta)$ is precisely to push left and right embeddings away from each other, to avoid placing embeddings of dissimilar items near each other, a phenomenon referred to as folding of the embedding space~\citep{xin2017folding}.

\begin{figure}[h!]
\centering
\begin{subfigure}[b]{.49\textwidth}
\includegraphics[width=\textwidth]{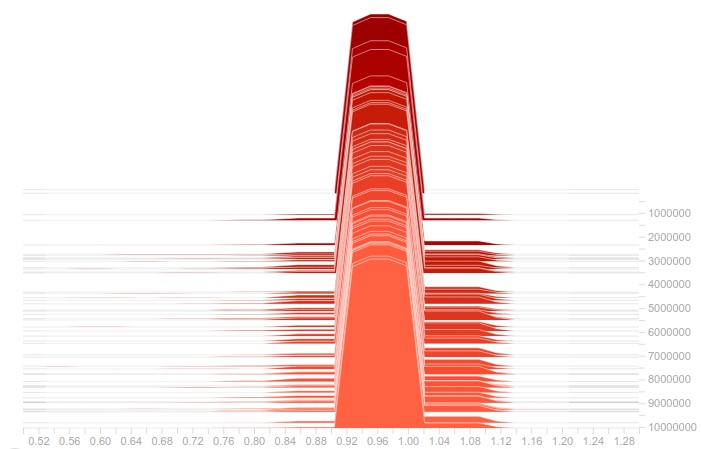}
\caption{$\lambda = 10^{-2}$, observed pairs.}
\end{subfigure}%
\begin{subfigure}[b]{.49\textwidth}
\includegraphics[width=\textwidth]{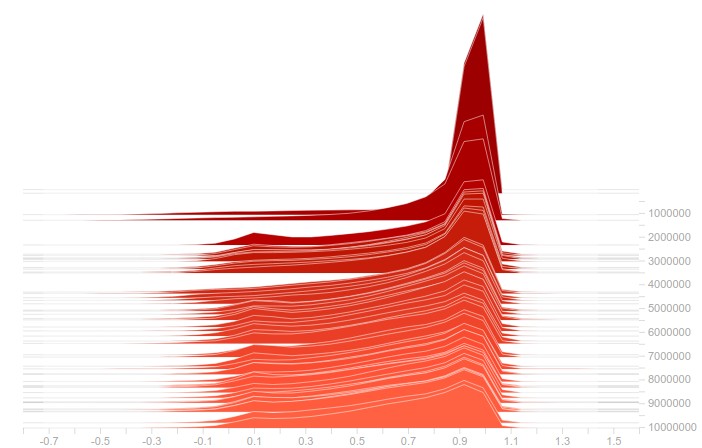}
\caption{$\lambda = 10^{-2}$, random pairs.}
\end{subfigure}
\begin{subfigure}[b]{.49\textwidth}
\includegraphics[width=\textwidth]{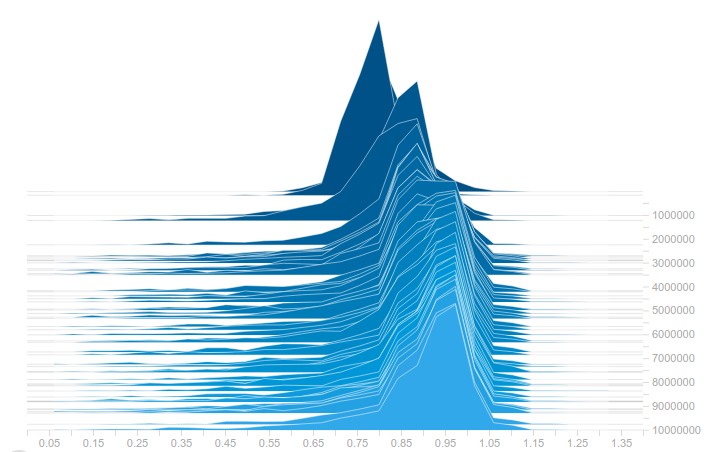}%
\caption{$\lambda = 10$, observed pairs.}
\end{subfigure}%
\begin{subfigure}[b]{.49\textwidth}
\includegraphics[width=\textwidth]{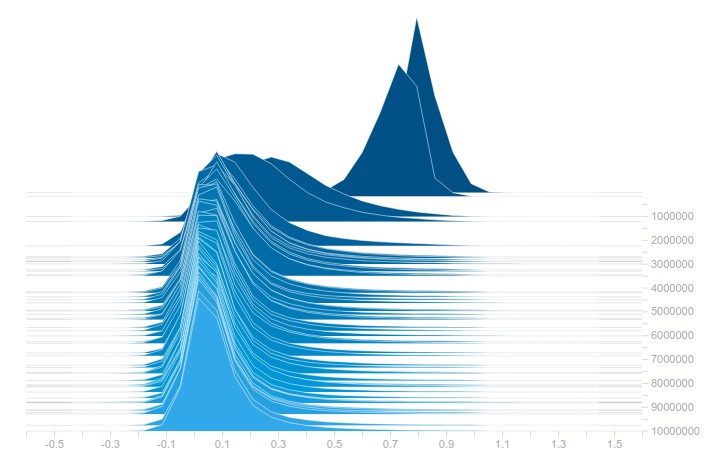}
\caption{$\lambda = 10$, random pairs.}
\end{subfigure}%
\caption{Evolution of the inner product distribution $\braket{u_i(\theta^{(t)})}{v_j(\theta^{(t)})}$ in the Wikipedia \texttt{en} model trained with different gravity coefficients  $\lambda$, for observed pairs (left) and random pairs (right).}
\label{fig:dot_distribution}
\end{figure}

\begin{figure}[h!]
\centering
\includegraphics[width=.5\textwidth]{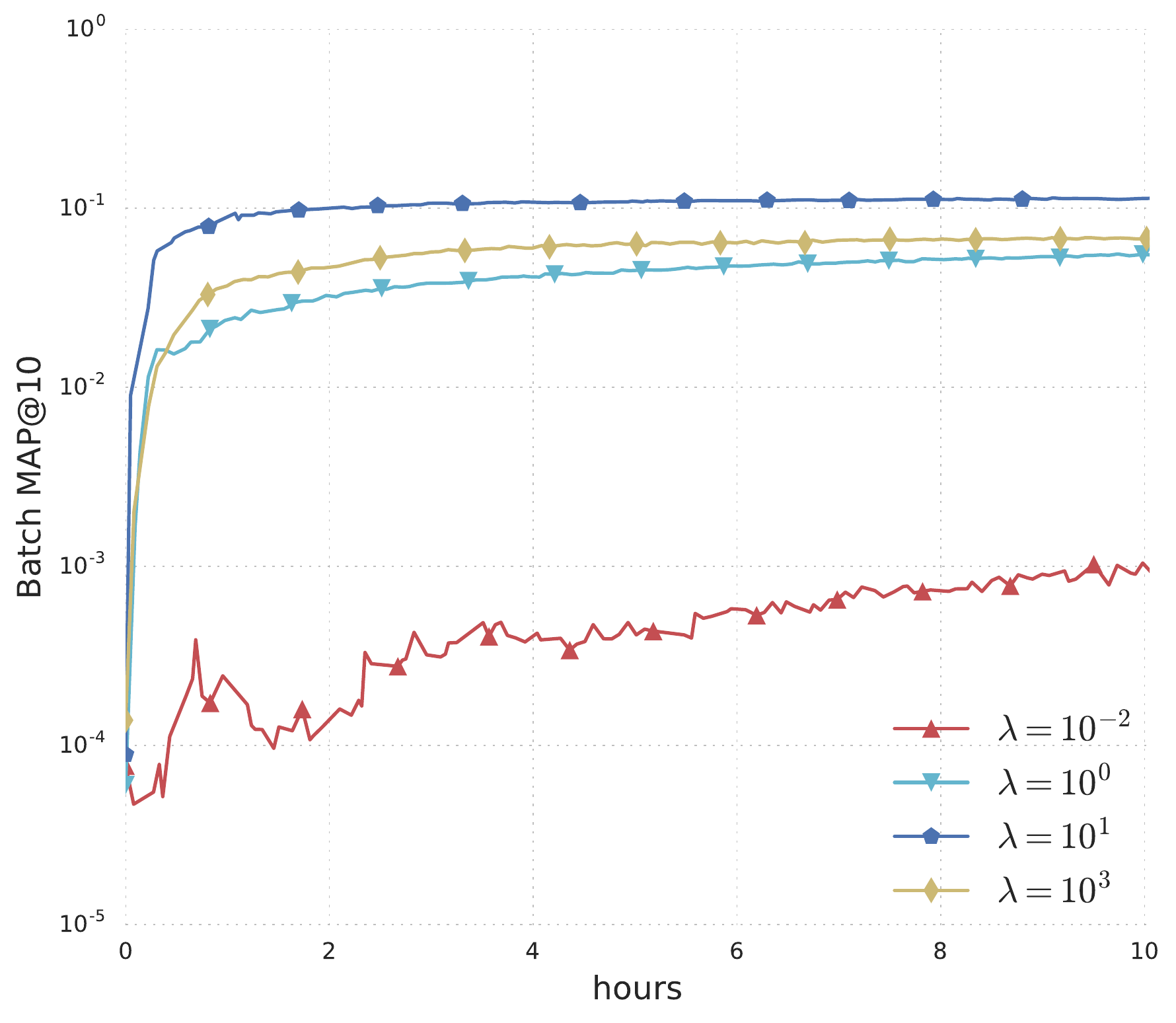}
\caption{Mean Average Precision of the Wikipedia \texttt{en} model, trained with different values of the gravity coefficient $\lambda$.}
\label{fig:effect_of_gravity_weight}
\end{figure}

In order to illustrate this effect of the gravity term on the embeddings, we visualize, in Figure~\ref{fig:dot_distribution}, the distribution of the inner product $\braket{u_i(\theta^{(t)})}{v_j(\theta^{(t)})}$, for random pairs $(i, j)$, and for observed pairs ($i = j$), and how these distributions change as $t$ increases. The plots are generated for the Wikipedia \texttt{en} model described in Section~\ref{sec:experiments}, trained with SOGram ($\alpha = 0.01$), with two different values of the gravity coefficient, $\lambda = 10^{-2}$ and $\lambda = 10$. In both cases, the distribution for observed pairs remains concentrated around values close to $1$, as one expects (recall that the target similarity is $1$ for observed pairs, i.e. pairs of connected pages in the Wikipedia graph). The distributions for random pairs, however, are very different: with $\lambda = 10$, the distribution quickly concentrates around a value close to $0$, while with $\lambda = 10^{-2}$ the distribution is more flat, and a large proportion of pairs have a high inner-product. This indicates that with a lower $\lambda$, the model is more likely to fold, i.e. place embeddings of unrelated items near each other. This is consistent with the validation MAP, reported in Figure~\ref{fig:effect_of_gravity_weight}. With $\lambda = 10^{-2}$, the validation MAP increases very slowly, and remains two orders of magnitude smaller than the model trained with $\lambda = 10$. The figure also shows that when the gravity coefficient is too large, the model is over-regularized and the MAP decreases.

To conclude this section, we also note that equation~\eqref{eq:gravity_quadratic} gives an intuitive motivation for the algorithms developed in this paper. Since the same quadratic form $\braket{\cdot}{G_v(\theta) \cdot}$ applies to all left embeddings $u_i$, maintaining an estimate $\hat G_v$ of $G_v(\theta)$ is much more efficient than estimating individual gradients (if one were to apply variance reduction to the gradients instead of the Gramians).

\section{Generalization to low-rank priors}
\label{app:low-rank}
So far, we have assumed a uniform zero prior to simplify the notation. In this section, we relax this assumption. Suppose that the prior is given by a low-rank matrix $P = Q R^\top$, where $Q, R \in \Rbb^{n \times k_P}$. In other words, the prior for a given pair $(i, j)$ is given by the dot product of two vectors $p_{ij} = \braket{q_i}{r_j}$. In practice, such a low-rank prior can be obtained, for example, by first training a simple low-rank matrix approximation of the similarity matrix $S$.

Given this low-rank prior, the penalty term~\eqref{eq:gravity_sum} becomes
\begin{align*}
g^P(\theta)
&= \frac{1}{n^2}\sum_{i = 1}^n \sum_{j = 1}^n [U_\theta V_\theta^\top - Q R^\top]_{ij}^2\\
&= \frac{1}{n^2}\braket{\Ux_\theta \Uy_\theta^\top - QR^\top}{\Ux_\theta \Uy_\theta^\top - QR^\top} \\
&= \frac{1}{n^2}\left[ \braket{\Ux_\theta^\top\Ux_\theta}{\Uy_\theta^\top \Uy_\theta} - 2\braket{\Ux_\theta^\top Q}{\Uy_\theta^\top R} + c \right],
\end{align*}%
where $c = \braket{Q^\top Q}{R^\top R}$ is a constant that does not depend on $\theta$. Here, we used a superscript $P$ in $g^P$ to disambiguate the zero-prior case.

Now, if we define weighted embedding matrices
\begin{align*}
\begin{cases}
H_\ux(\theta) \coloneqq \frac{1}{n}\Ux_\theta Q = \frac{1}{n}\sum_{i = 1}^n \ux_i(\theta)\otimes q_i \\
H_\uy(\theta) \coloneqq \frac{1}{n}\Uy_\theta R = \frac{1}{n}\sum_{i = 1}^n \uy_i(\theta)\otimes r_i,
\end{cases}
\end{align*}
the penalty term becomes
\[
g^P(\theta) = \braket{\Gx(\theta)}{\Gy(\theta)} - 2\braket{H_\ux(\theta)}{H_\uy(\theta)} + c.
\]
Finally, if we maintain estimates ${\hat H}_\ux, {\hat H}_\uy$ of $H_\ux(\theta), H_\uy(\theta)$, respectively (using the methods proposed in Section~\ref{sec:estimation}), we can approximate $\nabla g^P(\theta)$ by the gradient of
\begin{multline}
\label{eq:gravity_estimate_lr_prior}
\hatg^P_i(\theta, \hatGx, \hatGy, {\hat H}_\ux, {\hat H}_\uy) \coloneqq \\
\braket{\ux_i(\theta)}{\hatGy \ux_i(\theta)} + \braket{\uy_i(\theta)}{\hatGx \uy_i(\theta)} -  2\braket{\ux_i(\theta)}{{\hat H}_\uy q_i} - 2\braket{\uy_i(\theta)}{{\hat H}_\ux r_i}.
\end{multline}
Proposition~\ref{prop:unbiased} and Algorithms~\ref{alg:sagram} and~\ref{alg:sogram} can be generalized to the low-rank prior case by adding updates for ${\hat H}_\ux, {\hat H}_\uy$, and by using expression~\eqref{eq:gravity_estimate_lr_prior} of $\hatg^P_i$ when computing the gradient estimate.

\begin{proposition}
If $i$ is drawn uniformly in $\{1, \dots, n\}$, and $\hatGx$, $\hatGy$, ${\hat H}_\ux$, ${\hat H}_\uy$ are unbiased estimates of $\Gx(\theta)$, $\Gy(\theta)$, $H_\ux(\theta)$, $H_\uy(\theta)$, respectively, then $\nabla_\theta \hatg^P_i(\theta, \hatGx, \hatGy, {\hat H}_\ux, {\hat H}_\uy)$ is an unbiased estimate of $\nabla g^P(\theta)$.
\end{proposition}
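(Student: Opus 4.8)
The plan is to mirror the proof of Proposition~\ref{prop:unbiased}, exploiting the linearity of both the gradient and the expectation to split $g^P$ into three pieces. Writing $g^P(\theta) = \braket{\Gx(\theta)}{\Gy(\theta)} - 2\braket{H_\ux(\theta)}{H_\uy(\theta)} + c$, the first term is exactly the zero-prior gravity $g(\theta)$, whose estimate $\braket{\ux_i(\theta)}{\hatGy\ux_i(\theta)} + \braket{\uy_i(\theta)}{\hatGx\uy_i(\theta)}$ already yields an unbiased gradient estimate by Proposition~\ref{prop:unbiased}, using only the unbiasedness and independence of $\hatGx, \hatGy$. The constant $c$ has zero gradient. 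Hence the only new work is to show that the two cross terms in~\eqref{eq:gravity_estimate_lr_prior}, namely $-2\braket{\ux_i(\theta)}{{\hat H}_\uy q_i} - 2\braket{\uy_i(\theta)}{{\hat H}_\ux r_i}$, have gradient whose expectation equals $\nabla_\theta[-2\braket{H_\ux(\theta)}{H_\uy(\theta)}]$.

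For the cross term, first I would rewrite each scalar inner product as a matrix inner product against the corresponding estimate, using the adjoint property to get $\braket{\ux_i(\theta)}{{\hat H}_\uy q_i} = \braket{{\hat H}_\uy}{\ux_i(\theta)\otimes q_i}$ and $\braket{\uy_i(\theta)}{{\hat H}_\ux r_i} = \braket{{\hat H}_\ux}{\uy_i(\theta)\otimes r_i}$. Introducing per-example weighted outer products $H_{\ux, i}(\theta) \coloneqq \ux_i(\theta)\otimes q_i$ and $H_{\uy, i}(\theta) \coloneqq \uy_i(\theta)\otimes r_i$, so that $H_\ux(\theta) = \frac{1}{n}\sum_i H_{\ux, i}(\theta)$ and likewise for $H_\uy$, applying the chain rule (holding the estimates fixed) gives the gradient of the cross terms as $-2 J_{H_\ux, i}(\theta)[{\hat H}_\uy] - 2 J_{H_\uy, i}(\theta)[{\hat H}_\ux]$, where $J_{H_\ux, i}(\theta)$ is the Jacobian of $H_{\ux, i}(\theta)$, exactly paralleling the tensor $J_{\ux, i}$ in the original proof.

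I would then take the expectation over $i \sim \text{Uniform}(n)$. As in Proposition~\ref{prop:unbiased}, averaging the per-example Jacobians recovers the full Jacobian, $\Exp_i[J_{H_\ux, i}(\theta)] = J_{H_\ux}(\theta)$, and by independence of ${\hat H}_\ux, {\hat H}_\uy$ from $i$ together with their unbiasedness one obtains $\Exp[J_{H_\ux, i}(\theta)[{\hat H}_\uy]] = J_{H_\ux}(\theta)[H_\uy(\theta)]$ and symmetrically for the other term. This is precisely $\nabla_\theta[-2\braket{H_\ux(\theta)}{H_\uy(\theta)}]$ obtained by the chain rule, so summing the three contributions gives $\Exp[\nabla_\theta \hatg^P_i] = \nabla g^P(\theta)$.

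The argument is a near-verbatim extension of Proposition~\ref{prop:unbiased}, so I do not expect a genuine obstacle; the only point requiring care is the bookkeeping of the cross term, namely checking the adjoint rewriting $\braket{\ux_i}{{\hat H}_\uy q_i} = \braket{{\hat H}_\uy}{\ux_i\otimes q_i}$ and confirming that the two independence-plus-unbiasedness reductions (one for $\hatGx, \hatGy$, one for ${\hat H}_\ux, {\hat H}_\uy$) apply termwise without coupling across the distinct estimates.
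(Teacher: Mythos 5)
Your argument is correct and is exactly what the paper intends: its own proof of this proposition is the single line ``Similar to the proof of Proposition~\ref{prop:unbiased},'' and your write-up fills in precisely that extension, handling the Gramian term by citing Proposition~\ref{prop:unbiased} and the cross terms via the same adjoint-rewrite, per-example Jacobian, and independence-plus-unbiasedness factorization. The bookkeeping checks out, including the identity $\braket{\ux_i}{{\hat H}_\uy q_i} = \braket{{\hat H}_\uy}{\ux_i\otimes q_i}$ and the matching of the two halves of the product rule against the two cross terms in~\eqref{eq:gravity_estimate_lr_prior}.
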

\begin{proof}
Similar to the proof of Proposition~\ref{prop:unbiased}.
\end{proof}

The generalized versions of SOGram and SAGram are stated below, where we highlight the differences compared to the zero-prior versions. 

\newpage
\begin{algorithm}[h!]
   \caption{{SAGram (Stochastic Average Gramian) with low-rank prior}}
\begin{algorithmic}[1]
    \STATE {\bfseries Input:} Training data $\{(\xf_i, \yf_i, s_i)\}_{i \in \{1, \dots, n\}}$, \emp{low-rank priors $\{q_i, r_i\}_{i \in \{1, \dots, n\}}$}
    \STATE {\bfseries Initialization phase}
    \begin{ALC@g}
        \STATE draw $\theta$ randomly
        \STATE $\hatux_i \leftarrow \ux_i(\theta), \ \hatuy_i \leftarrow \uy_i(\theta) \quad \forall i \in \{1, \dots, n\}$
        \STATE ${\hat S_\ux} \leftarrow \frac{1}{n}\sum_{i = 1}^n \hatux_i \otimes \hatux_i$, ${\hat S_\uy} \leftarrow \frac{1}{n}\sum_{i = 1}^n \hatuy_i \otimes \hatuy_i$
        \emp{\STATE ${\hat T_\ux} \leftarrow \frac{1}{n}\sum_{i = 1}^n \hatux_i \otimes q_i$, ${\hat T_\uy} \leftarrow \frac{1}{n}\sum_{i = 1}^n \hatuy_i \otimes r_i$}
    \end{ALC@g}
    \REPEAT
            \STATE Update Gramian estimates ($i \sim \text{Uniform}(n)$)
        \begin{ALC@g}
            \STATE $\hatGx \leftarrow {\hat S_\ux} + \beta [\ux_i(\theta) \otimes \ux_i(\theta) - \hatux_i \otimes \hatux_i]$, \quad
            $\hatGy \leftarrow {\hat S_\uy} + \beta [\uy_i(\theta) \otimes \uy_i(\theta) - \hatuy_i \otimes \hatuy_i]$
        \end{ALC@g}
        \emp{\STATE Update weighted embedding estimates\\
        \begin{ALC@g}
            \STATE $\hat H_\ux \leftarrow {\hat T_\ux} + \lambda [(\ux_i(\theta) - \hatux_i) \otimes q_i]$\\
            \STATE $\hat H_\uy \leftarrow {\hat T_\uy} + \lambda [(\uy_i(\theta) - \hatuy_i) \otimes r_i]$
        \end{ALC@g}
        }
        \STATE Update model parameters then update caches ($i \sim \text{Uniform}(n)$)
        \begin{ALC@g}
            \STATE $\theta \leftarrow \theta - \eta \nabla_\theta [f_i(\theta) + \lambda\emp{\hat g^P(\theta, \hatGx, \hatGy, \hat H_\ux, \hat H_\uy)}]$
            \STATE ${\hat S_\ux} \leftarrow {\hat S_\ux} + \frac{1}{n} [\ux_i(\theta)\otimes \ux_i(\theta) - \hatux_i \otimes \hatux_i]$, \quad ${\hat S_\uy} \leftarrow {\hat S_\uy} + \frac{1}{n} [\uy_i(\theta)\otimes \uy_i(\theta) - \hatuy_i \otimes \hatuy_i]$
            \emp{\STATE $\hat T_\ux \leftarrow {\hat T_\ux} + \frac{1}{n} [(\ux_i(\theta) - \hatux_i) \otimes q_i]$, \quad
             $\hat T_\uy \leftarrow {\hat T_\uy} + \frac{1}{n} [(\uy_i(\theta) - \hatuy_i) \otimes r_i]$
			}
            \STATE $\hatux_i \leftarrow \ux_i(\theta), \ \hatuy_i \leftarrow \uy_i(\theta)$
        \end{ALC@g}
    \UNTIL{stopping criterion}
\end{algorithmic}
\end{algorithm}

\begin{algorithm}[h!]
   \caption{SOGram (Stochastic Online Gramian) with low-rank prior}
\begin{algorithmic}[1]
    \STATE {\bfseries Input:} Training data $\{(\xf_i, \yf_i, s_i)\}_{i \in \{1, \dots, n\}}$, \emp{low-rank priors $\{q_i, r_i\}_{i \in \{1, \dots, n\}}$}
    \STATE {\bfseries Initialization phase}
    \begin{ALC@g}
        \STATE draw $\theta$ randomly
        \STATE $\hatGx, \hatGy \leftarrow 0^{k\times k}$
    \end{ALC@g}
    \REPEAT
        \STATE Update Gramian estimates ($i \sim \text{Uniform}(n)$)
        \begin{ALC@g}
            \STATE $\hatGx \leftarrow (1 - \alpha)\hatGx + \alpha \ux_i(\theta)\otimes \ux_i(\theta)$, \quad $\hatGy \leftarrow (1 - \alpha)\hatGy + \alpha \uy_i(\theta)\otimes \uy_i(\theta)$
        \end{ALC@g}
        \emp{\STATE Update weighted embedding estimates\\
        \begin{ALC@g}
            \STATE $\hat H_\ux \leftarrow (1-\alpha){\hat H_\ux} + \alpha \ux_i(\theta) \otimes q_i$, \quad $\hat H_\uy \leftarrow (1-\alpha)\hat H_\uy + \alpha \uy_i(\theta) \otimes r_i$
        \end{ALC@g}
        }
        \STATE Update model parameters ($i \sim \text{Uniform}(n)$)
        \begin{ALC@g}
            \STATE $\theta \leftarrow \theta - \eta \nabla_\theta [f_i(\theta) + \lambda\emp{\hat g^P(\theta, \hatGx, \hatGy, \hat H_\ux, \hat H_\uy)}]$
        \end{ALC@g}
    \UNTIL{stopping criterion}
\end{algorithmic}
\end{algorithm}

\newpage
\section{Further experiments on quality of Gramian estimates}
\label{app:gramian_experiments}
In addition to the experiment on Wikipedia \texttt{simple}, reported in Section~\ref{sec:experiments}, we also evaluated the quality of the Gramian esimates on Wikipedia \texttt{en}. Due to the large number of embeddings, computing the exact Gramians is no longer feasible, so we approximate it using a large sample of 1M embeddings. The results are reported in Figure~\ref{fig:gram_tradeoff_en}, which shows the normalized Frobenius distance between the Gramian estimates $\hat G_u$ and (the large sample approximation of) the true Gramian $G_u$. The results are similar to the experiment on \texttt{simple}: with a lower $\alpha$, the estimation error is initially high, but decays to a lower value as training progresses, which can be explained by the bias-variance tradeoff discussed in Proposition~\ref{prop:bias-variance}.

The tradeoff is affected by the trajectory of the true Gramians: smaller changes in the Gramians (captured by the parameter $\delta$ in Proposition~\ref{prop:bias-variance}) induce a smaller bias. In particular, changing the learning rate $\eta$ of the main algorithm can affect the performance of the Gramian estimates by affecting the rate of change of the true Gramians. To investiage this effect, we ran the same experiment with two different learning rates, $\eta = 0.01$ as in Section~\ref{sec:experiments}, and a lower learning rate $\eta = 0.002$. The errors converge to similar values in both cases, but the error decay occurs much faster with smaller $\eta$, which is consistent with our analysis.
\begin{figure}[h]
\centering
\includegraphics[width=.49\textwidth]{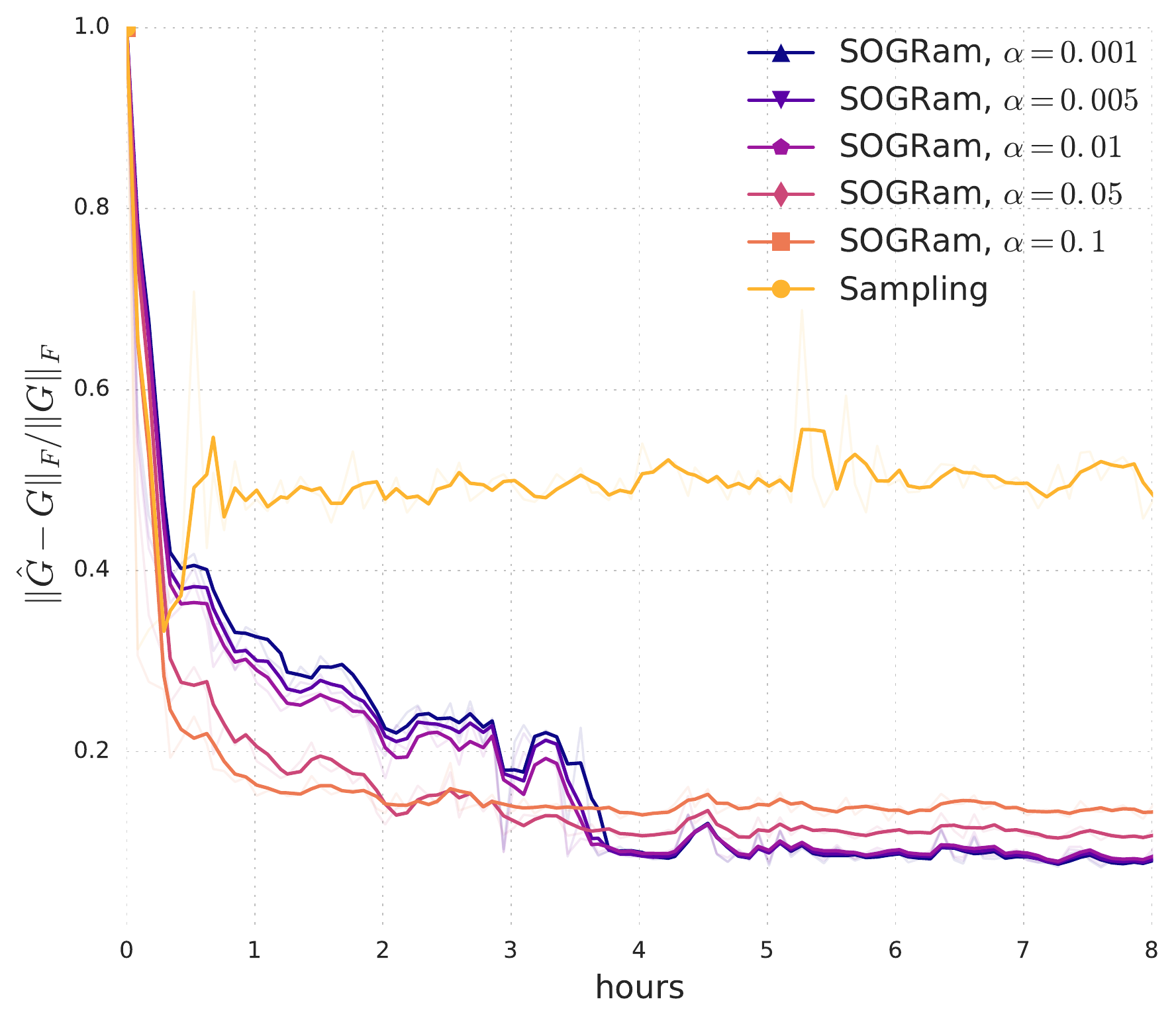}%
\includegraphics[width=.49\textwidth]{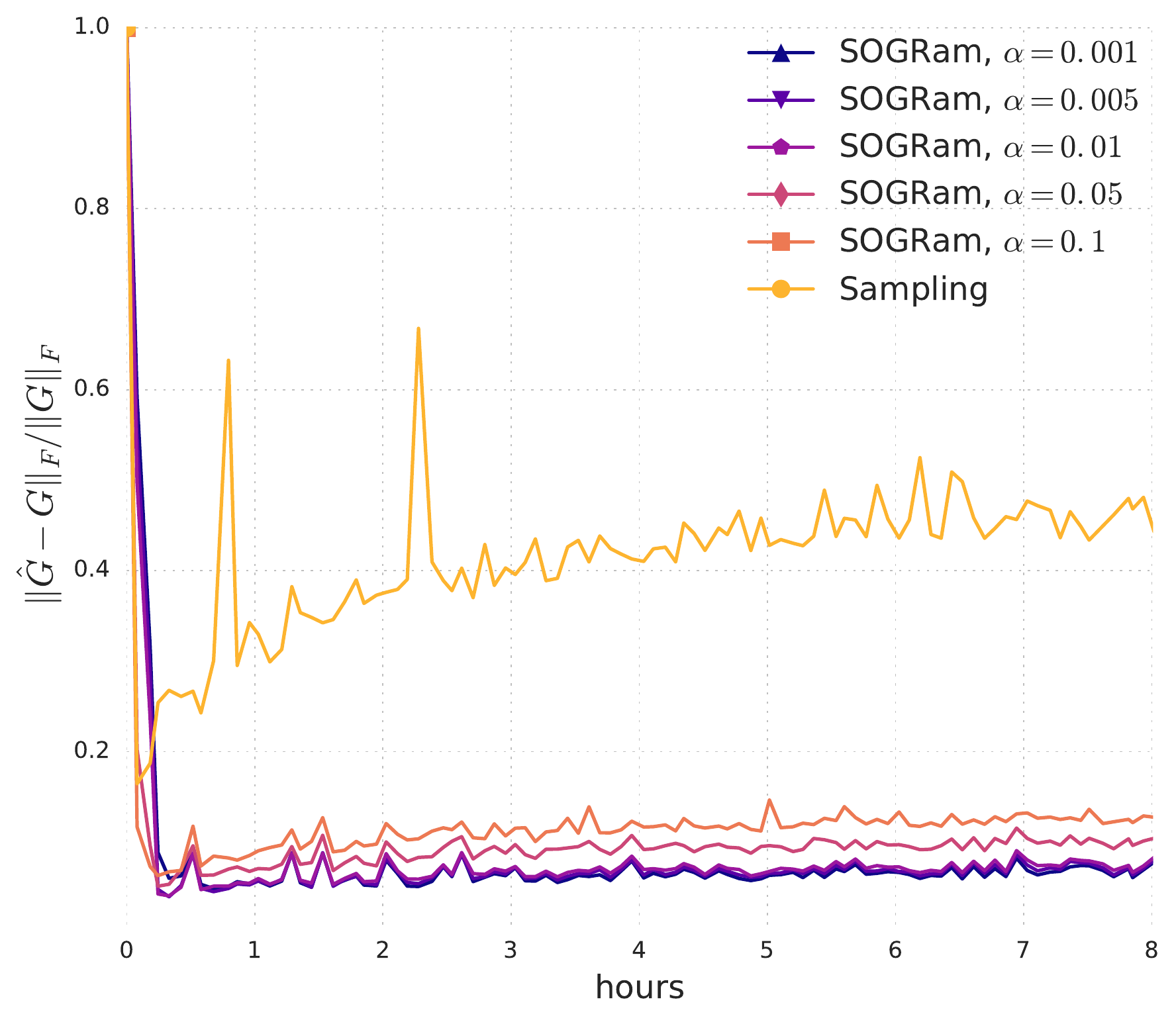}%
   \caption{Gramian estimation error on \texttt{en}, for SOGram with different values of $\alpha$, and different learning rates. The left and right figures correspond respectively to $\eta = 0.01$ and $\eta = 0.002$.}
\label{fig:gram_tradeoff_en}
\end{figure}

\section{Experiment on MovieLens data}
\label{app:movielens}
In this section, we report experiments on a regression task on MovieLens.

\paragraph{Dataset} The MovieLens dataset consists of movie ratings given by a set of users. In our notation, the left features $x$ represent a user, the right features $y$ represent an item, and the target similarity is the rating of movie $y$ by user $x$.
The data is partitioned into a training and a validation set using a (80\%-20\%) split. Table~\ref{tbl:movielens-dataset} gives a basic description of the data size. Note that it is comparable to the \texttt{simple} dataset in the Wikipedia experiments.

\begin{table}[h]
\centering
{\small\begin{tabular}{l|r|r|r}
\hline
Dataset & \# users & \# movies & \# ratings \\
 \hline
 \texttt{MovieLens} & 72K & 10K &  10M \\
 \hline
 \end{tabular}}%
  \vspace{.05in}
 \caption{Corpus size of the MovieLens dataset.}\label{tbl:movielens-dataset}
\end{table}

\paragraph{Model} We train a two-tower neural network model, as described in Figure~\ref{fig:model}, where each tower consists of an input layer, a hidden layer, and output embedding dimension $k = 35$. The left tower takes as input a one-hot encoding of a unique user id, and the right tower takes as input one-hot encodings of a unique movie id, the release year of the movie, and a bag-of-words representation of the genres of the movie. These input embeddings are concatenated and used as input to the right tower.

\paragraph{Methods} The model is trained using SOGram with different values of $\alpha$, and sampling as a baseline. We use a learning rate $\eta = 0.05$, and gravity coefficient $\lambda = 1$. We measure mean average precision on the trainig set and validation set, following the same procedure described in Section~\ref{sec:experiments}. The results are given in Figure~\ref{fig:movielens-validation-MAP}.

\begin{figure}[h]
\centering
\includegraphics[width=.49\textwidth]{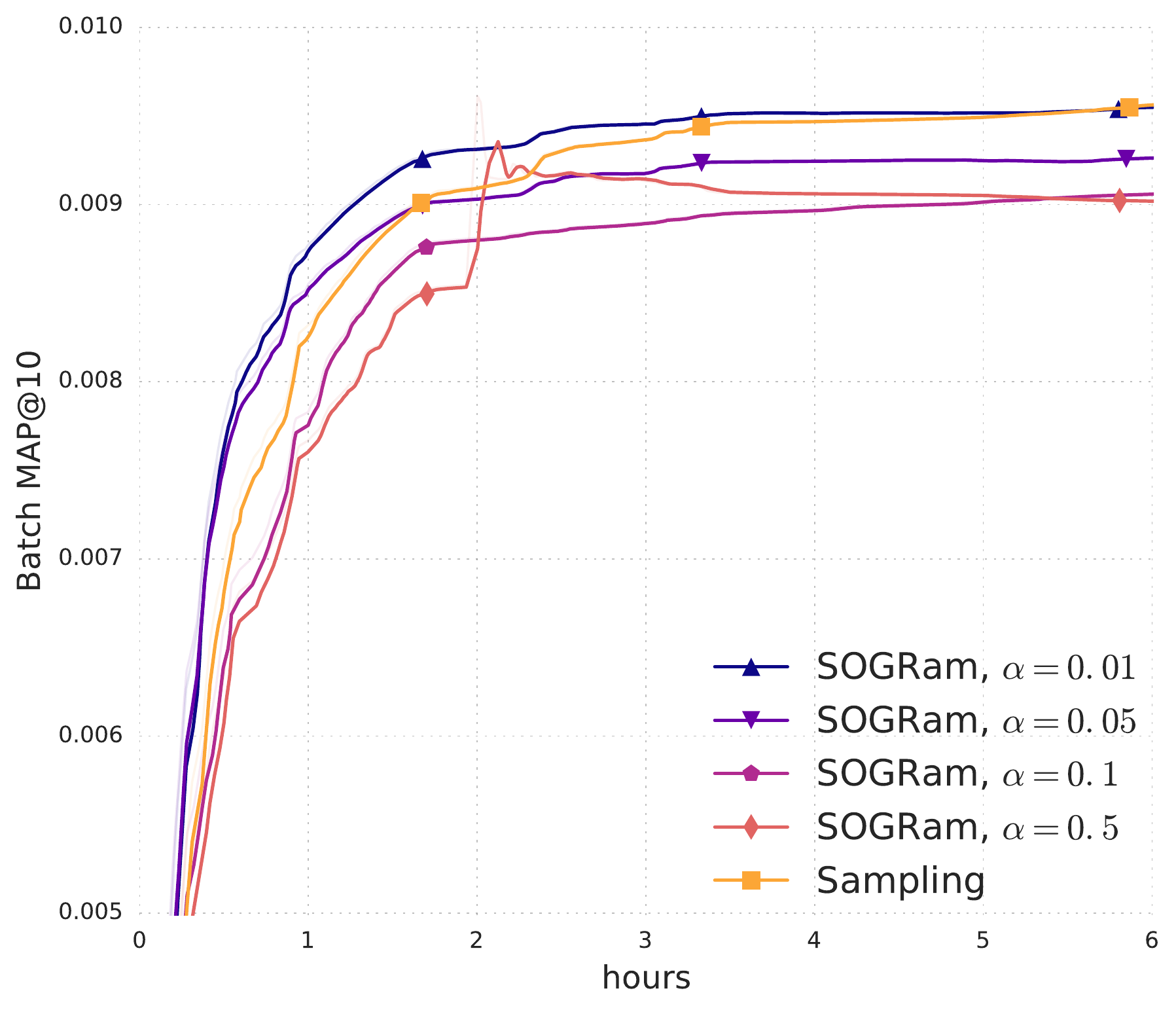}
\includegraphics[width=.49\textwidth]{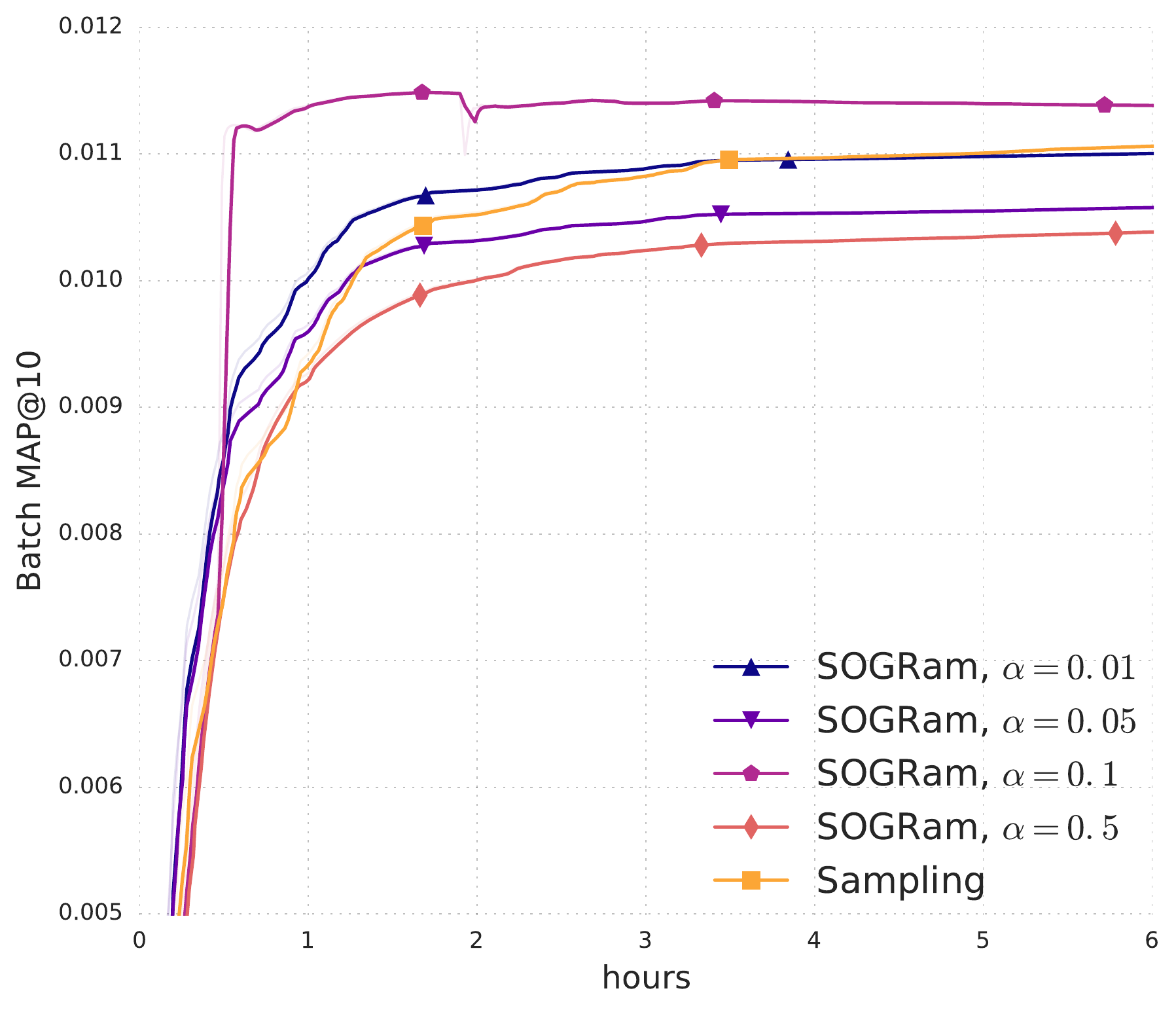}
\caption{Mean average precision at 10 on the training set (left) and the validation set (right), for different methods, on the MovieLens dataset.}
\label{fig:movielens-validation-MAP}
\end{figure}

\paragraph{Results} The results are similar to those reported on the Wikipedia \texttt{simple} dataset, which is comparable in corpus size and number of observations to MovieLens. The best validation mean average precision is achieved by SOGram with $\alpha = 0.1$ (for an improvement of 2.9\% compared to the sampling baseline), despite its poor performance on the training set, which indicates that better estimation of the gravity term $g(\theta)$ induces better regularization. The impact on training speed is also remarkable in this case, SOGram with $\alpha = 0.1$ achieves a better validation performance in under 1 hour of training than the sampling baseline in 6 hours.


\end{document}